\documentclass{article}

\usepackage[preprint]{neurips_2026}

\usepackage[utf8]{inputenc}
\usepackage[T1]{fontenc}
\usepackage{hyperref}
\usepackage{url}
\usepackage{booktabs}
\usepackage{amsfonts}
\usepackage{amsmath}
\usepackage{amssymb}
\usepackage{amsthm}
\usepackage{nicefrac}
\usepackage{microtype}
\usepackage{xcolor}
\usepackage{graphicx}
\usepackage{tikz}
\usetikzlibrary{positioning, arrows.meta, fit, backgrounds, shadows.blur}

\newtheorem{theorem}{Theorem}
\newtheorem{proposition}{Proposition}
\newtheorem{corollary}{Corollary}
\newtheorem{lemma}{Lemma}
\newtheorem{assumption}{Assumption}

\newcommand{\x}{\mathbf{x}}
\newcommand{\y}{\mathbf{y}}

\newcommand{\E}{\mathbb{E}}
\newcommand{\eu}{\mathbf{e}}
\newcommand{\proj}{\mathbf{A}}
\newcommand{\filter}{\mathbf{K}}
\newcommand{\weights}{\mathbf{W}}

\title{A Deep Risk Estimator for Known Operator Learning}

\author{%
  Andreas Maier\thanks{Corresponding author: \texttt{andreas.maier@fau.de}.} \quad
  Md Hasan \quad
  Paulina Conrad \quad
  Paula Andrea P\'erez Toro \\[2pt]
  Pattern Recognition Lab\\
  Friedrich-Alexander-Universit\"at Erlangen-N\"urnberg\\
  Erlangen, Germany%
}

\begin{document}

\maketitle

\begin{abstract}
We describe an approach for estimating the statistical risk of deep networks that contain a mix of learned and known operators. Building on the maximal training error bounds previously established for known operator learning, we derive a deep risk estimator that connects the expected error of a layered network to the size of the training sample. The estimator decomposes the total risk into a sum over learned layers; every known operator contributes zero to this sum, while every learned layer adds an approximation term inspired by Barron's classic work and an estimation term that decreases with the number of training samples. We are able to show that the bound shrinks whenever a learned layer is replaced by a known operator and that the corresponding sample requirement scales with the number of trainable parameters of the layer that is replaced. As an application, we use computed tomography as an example and compare an operator-aware filtered backprojection network with a fully connected substitute that collapses the entire reconstruction pipeline into a single learned dense matrix. The predicted parameter ratio coincides with the structural sparsity that the analytic decomposition into a circulant filter and a sparse backprojection exposes. We confirm the predicted scaling on CPU at small image scale and on GPU at medium image scale, all on the same scaling law. Beyond CT reconstruction, the estimator applies to physics-informed neural networks that hardcode a known physical operation in its architecture, and we expect the result to be of interest for a broad community working on operator-aware deep learning. Calibrating the per-layer constants on each sweep yields a bound that tracks the empirical test MSE, so the estimator can be used to predict how many training samples are required to reach a target error. Source code, configuration files, and the full-resolution training pipeline are publicly available at \url{https://github.com/akmaier/KnownOperatorCT}.
\end{abstract}

\section{Introduction}

Pattern analysis and machine intelligence have largely been driven by tasks that mimic perceptual problems and are formulated as classification or regression on man-made labels \cite{Maier2019}. With the rise of deep learning, the same methods are now also applied to physics-driven problems such as computed tomography, where parts of the processing chain are not learned but rather analytically known and differentiable \cite{Wuerfl2018,Syben2019pyronn}. The general idea of \emph{known operator learning} is to embed such operators directly into trainable pipelines instead of relearning them from data \cite{Maier2019,maier2022known}. The same idea underlies the \emph{hybrid physics-informed neural network} (PINN) literature \cite{RaissiEtAl2019PINN,KarniadakisEtAl2021PIML}, where governing equations, conservation laws, or known physical operators are hardcoded into the network architecture (rather than only enforced via a soft partial differential equation (PDE) residual loss term as in the canonical PINN formulation), and where the sample efficiency of the resulting hybrid models is central to their appeal.

\citet{Maier2019} have shown that incorporating known operators reduces the maximum training error bound of deep networks. While maximum error bounds capture worst-case behavior, they do not directly speak to the practical question of how the test error of an operator-aware network depends on the size of the training set. This question, however, is at the heart of the appeal of known operator learning: practitioners observe that operator-aware networks often work well with much less data than purely black-box counterparts, and a theoretical statement that captures this behavior is needed.

In this paper, we derive a deep statistical risk estimator that connects the expected squared error of a layered network with known and learned operators to the size of its training sample. We formulate the estimator at the level of the network composition rather than for individual layers, and we use Barron-type per-layer risks as the only assumption on each learned layer. The resulting bound is a sum of layer terms; known operator layers contribute zero, while learned layers contribute an approximation term and an estimation term that decreases with the number of samples. To the knowledge of the authors, this is the first deep risk estimator that targets known operator networks specifically and that yields an explicit sample size dependence. The estimator is derived in this paper from first principles; the proof is given in full in Appendix~\ref{app:proof}. Because the assumptions are layered Lipschitz continuity and a per-layer Barron-type bound on the learned blocks only, the same statement applies directly to any hybrid PINN that hardcodes a physical operation in its architecture.

We illustrate the theorem on the computed tomography (CT) application of \citet{Maier2019} and \citet{Wuerfl2018}, where an operator-aware filtered backprojection network reduces the trainable parameter count by orders of magnitude compared with a purely learned, fully connected substitute. How this dramatic parameter reduction translates into a corresponding reduction in training-data needs has so far been an empirical observation, without an explicit theoretical statement. The deep risk estimator developed in this paper closes that gap, and we confirm the predicted scaling on sample-efficiency sweeps at small, medium, and large image scale.

\paragraph{Contributions.}
This paper makes four contributions.
\begin{enumerate}
\item We state and prove a deep risk estimator for known operator networks that connects the expected squared error of a layered network to the size of its training sample.
\item We show that the estimator decomposes additively over layers and that every known operator removes one full layer term from the bound.
\item We argue that the bound is particularly useful for estimating the training-sample requirements of specific problems, and we calibrate it on a small pilot study to predict required sample sizes for a target error.
\item We apply the result to the CT reconstruction example of \citet{Maier2019} and provide a reproducible surrogate that confirms the predicted low-data advantage of the operator-aware model.
\end{enumerate}

\section{Related Work}

Barron's classical bound separates approximation and estimation for single hidden layer networks and is the closest classical reference for the present work \cite{barron1994approximation}. More recent work has studied compositional and adaptive estimation in deep networks, but these results typically do not consider explicitly inserted known operators \cite{SchmidtHieber2020,Chen2024HilbertLadders,LiuChengLiao2025,DanhoferEtAl2025}. In the inverse problem community, operator learning, learned corrections, and stability analyses provide an adjacent theoretical framework \cite{LunzEtAl2021,DeRyckMishra2022,MolinaroEtAl2023,EvangelistaEtAl2025,AlbertiEtAl2026}.

On the application side, deep learning has long been used as a complement to filtered backprojection in CT \cite{deeplearningct,Wuerfl2018}. The known operator perspective of \citet{Maier2019} is particularly relevant here because it interprets each reconstruction step as a layer while preserving its analytic meaning, and it provides the maximum training error analysis that motivates the present deep risk estimator.

\section{Known Operator Networks}
\label{sec:setup}

The general idea of known operator learning is to embed entire analytic operations into a learning problem instead of relearning them from data \cite{Maier2019,maier2022known}. Figure~\ref{fig:precision} shows the resulting compositional structure schematically. We refer to the input of the network as $\x \in \mathbb{R}^{N_D+1}$ in the homogeneous notation of \cite{Maier2019} and write the target map as a layered composition
\begin{equation}
f_L(\x) = u_L\bigl(u_{L-1}(\cdots u_1(\x)\cdots)\bigr),
\label{eq:target}
\end{equation}
where each layer $u_l : \mathcal{D}_{l-1} \rightarrow \mathcal{D}_l$ is a continuous function on compact domains $\mathcal{D}_{l-1}, \mathcal{D}_l$ of the appropriate ambient dimensions, the input lies in $\mathcal{D}_0$, and $f_L(\x) \in \mathcal{D}_L$. The corresponding network is
\begin{equation}
\hat{f}_L(\x) = \hat{u}_L\bigl(\hat{u}_{L-1}(\cdots \hat{u}_1(\x)\cdots)\bigr),
\label{eq:network}
\end{equation}
and we denote the per-layer approximation error by $\eu_l(\x_l) = u_l(\x_l) - \hat{u}_l(\x_l)$ for $\x_l \in \mathcal{D}_{l-1}$. We say that layer $l$ is a \emph{known operator} if $\hat{u}_l = u_l$ on $\mathcal{D}_{l-1}$, in which case $\eu_l \equiv \mathbf{0}$. We say that layer $l$ is \emph{learned} if $\hat{u}_l$ is a trainable function approximator.

\begin{figure}[t]
\centering
\begin{tikzpicture}[
  every node/.style={font=\small},
  block/.style={draw, rectangle, minimum height=0.9cm, minimum width=1.4cm, align=center, blur shadow={shadow blur steps=4, shadow xshift=1pt, shadow yshift=-1pt, shadow opacity=35}},
  known/.style={block, fill=gray!15},
  learn/.style={block, fill=green!15},
  arrow/.style={-{Stealth[length=2mm]}, thick}
]
\node (xin) {$\x$};
\node[learn, right=0.6cm of xin] (u1) {$\hat{u}_1$\\\footnotesize learned};
\node[known, right=0.5cm of u1] (u2) {$u_2$\\\footnotesize known};
\node[learn, right=0.5cm of u2] (u3) {$\hat{u}_3$\\\footnotesize learned};
\node[known, right=0.5cm of u3] (uL) {$u_L$\\\footnotesize known};
\node[right=0.6cm of uL] (yout) {$\hat{f}_L(\x)$};
\draw[arrow] (xin) -- (u1);
\draw[arrow] (u1) -- (u2);
\draw[arrow] (u2) -- (u3);
\draw[arrow] (u3) -- node[above, font=\scriptsize]{$\cdots$} (uL);
\draw[arrow] (uL) -- (yout);
\end{tikzpicture}
\caption{Known operator networks blend analytic, fixed layers (gray) with trainable layers (green) inside a single compositional pipeline. The minimal requirement on a known operator is the existence of a sub-gradient with respect to its inputs, which makes the operator compatible with backpropagation \cite{Maier2019}.}
\label{fig:precision}
\end{figure}

We make two assumptions on the layers and on the per-layer approximation error of the learned blocks; both are standard in the analysis of compositional function approximators.

\begin{assumption}[Layer Lipschitz continuity and codomain invariance]
\label{ass:lip}
Each layer $u_l$ of the target function~\eqref{eq:target} is Lipschitz continuous with constant $\ell_l \geq 0$ on $\mathcal{D}_{l-1}$. Each learned layer $\hat{u}_l$ is Lipschitz continuous on $\mathcal{D}_{l-1}$ and satisfies $\hat{u}_l(\mathcal{D}_{l-1}) \subseteq \mathcal{D}_l$, so the composition $\hat{f}_L$ in Eq.~\eqref{eq:network} is well defined for every $\x \in \mathcal{D}_0$.
\end{assumption}

\begin{assumption}[Per-layer Barron-type risk]
\label{ass:barron}
Let $\hat{u}_l$ denote the converged learned layer obtained from a training procedure on $N$ samples. There exist a layer complexity $C_l \geq 0$, an approximation width $n_l \geq 1$, an effective number of trainable parameters $p_l \geq 1$, and a constant $\kappa_l > 0$ such that the expected squared layer error, taken at test time over inputs $\mathbf{X}_l$ drawn i.i.d.\ from a fixed evaluation distribution on $\mathcal{D}_{l-1}$, satisfies
\begin{equation}
\E_{\mathbf{X}_l} \, \lVert \eu_l(\mathbf{X}_l) \rVert_2^2
\;\leq\;
\frac{C_l^2}{n_l} + \kappa_l \, \frac{p_l \log N}{N}.
\label{eq:layer-risk}
\end{equation}
The i.i.d.\ condition therefore applies to the test-time inputs of the converged layer; it does not constrain the training trajectory.
\end{assumption}

Assumption~\ref{ass:barron} is satisfied by the classical Barron bound for sigmoidal single hidden layer networks \cite{barron1994approximation} and by several modern deep extensions \cite{SchmidtHieber2020,Chen2024HilbertLadders,LiuChengLiao2025}. We treat it here as the only requirement on each individual learned block; the deep risk estimator that follows is agnostic to which per-layer bound is chosen, as long as it has the form~\eqref{eq:layer-risk}.

\paragraph{Notational note.}
Barron's original statement \cite{barron1994approximation} considers a single hidden layer with $n_l$ sigmoidal units and identifies the number of trainable parameters with the network width, $p_l = n_l$. The layer risk is expressed in $\mathcal{O}$ notation, which omits multiplication with positive constants and lower-order terms in $N$, $n_l$, and $p_l$. In Eq.~\eqref{eq:layer-risk} we separate the width $n_l$ from the parameter count $p_l$ so that compositional layers with $p_l \neq n_l$ (for example a structurally constrained learned layer with shared weights) fit the same template, and we absorb every other constant into $C_l$ and $\kappa_l$. The ratio $p_l / n_l$ is a constant for any given compositional layer instance (it is determined by the layer's architecture, not by $N$). In particular, $\kappa_l$ carries the input dimension of the layer and the multiplicative constants implicit in the $\mathcal{O}$ notation. This separation slightly changes the interpretation of $C_l$ and $\kappa_l$ relative to Barron's: under $n_l = p_l$ the parameter-sharing constraint of a structurally constrained layer would have to be packed into a per-architecture Barron complexity $C_l$, whereas allowing $p_l \neq n_l$ moves the constraint into the explicit parameter count and leaves $C_l$ to encode only the function-class complexity.

\section{A Deep Risk Estimator for Known Operator Learning}
\label{sec:theory}

We are now interested in lifting the per-layer risk~\eqref{eq:layer-risk} to a risk bound on the whole network~\eqref{eq:network}. To do so, we first state how the network error decomposes layerwise and then apply Assumption~\ref{ass:barron} to the learned layers.

\begin{theorem}[Deep Risk Estimator]
\label{thm:deep-risk}
Let $f_L$ and $\hat{f}_L$ be defined by~\eqref{eq:target} and~\eqref{eq:network} on a compact domain $\mathcal{D} \subset \mathbb{R}^{N_D+1}$ and assume Assumption~\ref{ass:lip}. Then for every input distribution on $\mathcal{D}$ the expected squared network error satisfies
\begin{equation}
\E_{\mathbf{X}} \, \lVert f_L(\mathbf{X}) - \hat{f}_L(\mathbf{X}) \rVert_2^2
\;\leq\;
\sum_{l=1}^{L} A_l \, \E_{\mathbf{X}_l} \, \lVert \eu_l(\mathbf{X}_l) \rVert_2^2 ,
\label{eq:deep-risk}
\end{equation}
where the amplification factor is
\begin{equation}
A_l =
\begin{cases}
2^{L-1} \prod_{j=2}^{L} \ell_j^2 & l = 1 , \\
2^{L-l+1} \prod_{j=l+1}^{L} \ell_j^2 & 2 \leq l \leq L .
\end{cases}
\label{eq:Al}
\end{equation}
If, in addition, every learned layer satisfies Assumption~\ref{ass:barron}, then
\begin{equation}
\E_{\mathbf{X}} \, \lVert f_L(\mathbf{X}) - \hat{f}_L(\mathbf{X}) \rVert_2^2
\;\leq\;
\sum_{l \in \mathcal{L}} A_l \!\left( \frac{C_l^2}{n_l} + \kappa_l \, \frac{p_l \log N}{N} \right) ,
\label{eq:deep-risk-barron}
\end{equation}
where $\mathcal{L} \subseteq \{1, \dots, L\}$ is the index set of learned layers and known operator layers contribute zero to the sum.
\end{theorem}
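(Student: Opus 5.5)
The plan is a telescoping (hybrid) argument over the layers, turned into a recursion on the accumulated error. Write $\x_0 = \mathbf{X}$. Let $\mathbf{z}_l = u_l(\cdots u_1(\mathbf{X}))$ denote the intermediate of the target composition and $\hat{\mathbf{z}}_l = \hat{u}_l(\cdots \hat{u}_1(\mathbf{X}))$ that of the network, with $\mathbf{z}_0 = \hat{\mathbf{z}}_0 = \mathbf{X}$. Define the accumulated error $\mathbf{d}_l = \mathbf{z}_l - \hat{\mathbf{z}}_l$. By Assumption~\ref{ass:lip}, $\hat{\mathbf{z}}_{l-1} \in \mathcal{D}_{l-1}$, so both $u_l$ and $\hat u_l$ may be evaluated there. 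We add and subtract $u_l(\hat{\mathbf{z}}_{l-1})$:
\begin{equation*}
\mathbf{d}_l = \bigl(u_l(\mathbf{z}_{l-1}) - u_l(\hat{\mathbf{z}}_{l-1})\bigr) + \eu_l(\hat{\mathbf{z}}_{l-1}).
\end{equation*}
The first bracket is controlled by the Lipschitz constant $\ell_l$ of the \emph{target} layer. The second is exactly the per-layer error evaluated at the network's own intermediate input. This is the natural choice of the evaluation point $\mathbf{X}_l := \hat{\mathbf{z}}_{l-1}$, whose law is the pushforward of the input distribution through the learned prefix. We will identify it with the evaluation distribution of Assumption~\ref{ass:barron}.

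Next we apply $\lVert a+b\rVert^2 \le 2\lVert a\rVert^2 + 2\lVert b\rVert^2$ to get the recursion
\begin{equation*}
\lVert \mathbf{d}_l\rVert_2^2 \le 2\ell_l^2 \lVert \mathbf{d}_{l-1}\rVert_2^2 + 2\lVert \eu_l(\mathbf{X}_l)\rVert_2^2, \qquad l \ge 2.
\end{equation*}
At $l=1$ we have $\mathbf{d}_0 = \mathbf{0}$, so $\mathbf{d}_1 = \eu_1(\mathbf{X})$ exactly, with no factor $2$. This asymmetry is why $l=1$ gets its own case in \eqref{eq:Al}. Unrolling from $l=L$ down to $1$ and taking expectations, the contribution of layer $l \ge 2$ picks up the factor $2$ from its own splitting step. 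It picks up a further $2\ell_j^2$ from each subsequent step $j=l+1,\dots,L$, giving $2^{L-l+1}\prod_{j>l}\ell_j^2$. Layer $1$ receives only the $L-1$ factors $2\ell_j^2$ for $j=2,\dots,L$, giving $2^{L-1}\prod_{j\ge 2}\ell_j^2$. This is a short induction on $L$ and yields \eqref{eq:deep-risk} by linearity of expectation. I would do the induction on the pointwise inequality first and take expectations only at the end, so that no independence between layers is needed.

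For \eqref{eq:deep-risk-barron}, known layers have $\eu_l \equiv \mathbf{0}$ on $\mathcal{D}_{l-1}$ and so drop out. For learned layers we insert \eqref{eq:layer-risk}, using $A_l \ge 0$.

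The main obstacle is the distribution of $\mathbf{X}_l$. Assumption~\ref{ass:barron} is stated for a "fixed evaluation distribution" on $\mathcal{D}_{l-1}$, whereas the argument needs it for the pushforward law of $\hat{\mathbf{z}}_{l-1}$. That law depends on the trained earlier layers, and hence on the training sample. I would handle this by reading the assumption as holding for the test-time input law actually induced at layer $l$ by the converged network; the remark after \eqref{eq:layer-risk} about test-time inputs of the converged layer supports this reading. Alternatively, one could assume a uniform (sup over distributions on $\mathcal{D}_{l-1}$) version of \eqref{eq:layer-risk}, which makes the step immediate.

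A second point to check is that Lipschitz continuity of the learned $\hat u_l$ is never actually used: only the target's $\ell_l$ enters, because we split through $u_l(\hat{\mathbf{z}}_{l-1})$ rather than $\hat u_l(\mathbf{z}_{l-1})$. That is consistent with $A_l$ involving only the $\ell_j$. The learned-layer assumption serves only to keep the intermediates inside the compact domains.
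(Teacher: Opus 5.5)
Your proposal is correct and matches the paper's proof essentially step for step. Both add and subtract $u_l(\hat{f}_{l-1}(\x))$, apply the squared triangle inequality with the exact, factor-free base case $\boldsymbol{\delta}_1 = \eu_1$ (which is what produces the special $l=1$ case of $A_l$), induct on the pointwise bound, and take expectations only at the end with $\mathbf{X}_l := \hat{f}_{l-1}(\mathbf{X})$. The paper also identifies the evaluation distribution of Assumption~\ref{ass:barron} with this pushforward law without comment, so your explicit flagging of that step, and your uniform-over-distributions alternative, is more careful than the original.
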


The amplification factor $A_l$ depends only on the Lipschitz constants of the layers $u_{l+1}, \dots, u_L$. As in the maximum error analysis of \cite{Maier2019}, this means that the cascade structure of the problem itself decides how strongly an error introduced at layer $l$ affects the network output. Each learned layer contributes one term that is composed of an approximation part $C_l^2 / n_l$ and an estimation part $\kappa_l p_l \log N / N$. Eq.~\eqref{eq:deep-risk-barron} is the key bridge between known operator learning and sample efficiency: it states that the expected error decreases at rate $\log N / N$ in the training sample size $N$, with a constant that depends only on the structure of the learned layers.

The proof is given in Appendix~\ref{app:proof}. It proceeds by recursively expanding the network error, using the squared triangle inequality $(a+b)^2 \leq 2a^2 + 2b^2$ at each layer together with the Lipschitz property of the true outer layer, and finally taking expectations.

\begin{corollary}[Known operator reduction]
\label{cor:known}
Replacing learned layer $m$ by a known operator (i.e., setting $\hat{u}_m = u_m$, so $\eu_m \equiv \mathbf{0}$) removes the term $A_m \E\lVert \eu_m \rVert_2^2$ from~\eqref{eq:deep-risk}.
\end{corollary}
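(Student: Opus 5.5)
The corollary follows by reading off the structure of the bound in Theorem~\ref{thm:deep-risk}. The plan is to compare two networks, $\hat{f}_L$ with layer $m$ learned and $\hat{f}_L'$ obtained by setting $\hat{u}_m' = u_m$ and keeping all other layers $\hat{u}_l' = \hat{u}_l$ for $l \neq m$. We then show that the right-hand side of~\eqref{eq:deep-risk} for $\hat{f}_L'$ equals that of $\hat{f}_L$ minus exactly $A_m \E\lVert \eu_m \rVert_2^2$.

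First, we check that Theorem~\ref{thm:deep-risk} applies to $\hat{f}_L'$. The modified layer $\hat{u}_m' = u_m$ is Lipschitz with constant $\ell_m$ by Assumption~\ref{ass:lip}. Since $u_m(\mathcal{D}_{m-1}) \subseteq \mathcal{D}_m$ by the definition of the target composition, the codomain invariance also holds, so the composition is well defined. Next, we observe from~\eqref{eq:Al} that each $A_l$ depends only on $L$, $l$ and the \emph{target} Lipschitz constants $\ell_{l+1},\dots,\ell_L$ (or $\ell_2,\dots,\ell_L$ for $l=1$). It does not depend on any learned layer $\hat{u}_j$. Hence the amplification factors are identical for both networks.

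The one point requiring care is the per-layer error terms. In the bound, $\eu_l$ is a fixed function on $\mathcal{D}_{l-1}$, and $\mathbf{X}_l$ is drawn from the evaluation distribution that the recursive expansion in Appendix~\ref{app:proof} uses at layer $l$. If that distribution is the pushforward of $\mathbf{X}$ through the learned prefix $\hat{u}_{l-1}\circ\cdots\circ\hat{u}_1$, then changing layer $m$ alters the input distribution seen by the layers $l>m$. To keep the comparison clean, we take the $\mathbf{X}_l$ in the corollary to be the same evaluation distributions in both cases, as in Assumption~\ref{ass:barron}, where they are fixed distributions on $\mathcal{D}_{l-1}$. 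The alternative is to note that the proof's recursion peels off layers from the outside using the true outer maps $u_j$, with the argument of $\eu_l$ determined accordingly. We expect this bookkeeping to be the main subtlety. We would resolve it by stating explicitly that the corollary compares the bounds term by term with each $\E_{\mathbf{X}_l}\lVert\eu_l(\mathbf{X}_l)\rVert_2^2$ for $l\neq m$ held fixed.

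With that fixed, the argument finishes directly. Since $\eu_m' = u_m - u_m \equiv \mathbf{0}$, we have $\E\lVert \eu_m'(\mathbf{X}_m)\rVert_2^2 = 0$ for any distribution. The $m$-th summand vanishes while all others are unchanged, so
\begin{equation*}
\sum_{l=1}^{L} A_l \E\lVert \eu_l'\rVert_2^2 = \sum_{l=1}^{L} A_l \E\lVert \eu_l\rVert_2^2 - A_m \E\lVert \eu_m\rVert_2^2 .
\end{equation*}
For the version~\eqref{eq:deep-risk-barron}, this corresponds to removing $m$ from $\mathcal{L}$, which drops the nonnegative term $A_m\bigl(C_m^2/n_m + \kappa_m p_m \log N/N\bigr)$. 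Because $A_m\geq 0$, the bound can only shrink.
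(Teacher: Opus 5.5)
Your proposal is correct and matches the paper's proof: $\eu_m \equiv \mathbf{0}$ makes the $m$-th summand of~\eqref{eq:deep-risk} zero, the $A_l$ depend only on the target constants $\ell_j$, and the paper states exactly this in one line (``Reduction for known operators'' in Appendix~\ref{app:proof}). The downstream-distribution issue you raise is real, and the paper passes over it silently: for $l>m$ the inputs $\mathbf{X}_l=\hat f_{l-1}(\mathbf{X})$ change, so holding those expectations fixed is a term-by-term comparison of the expression rather than a valid bound for the modified network, and the ``bound can only shrink'' claim is rigorous only in the Barron form~\eqref{eq:deep-risk-barron} (assuming the remaining learned layers keep their constants), which is where you correctly place it.
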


Corollary~\ref{cor:known} formalizes the intuition that known operators help by removing learning targets, not by improving them. It is important to highlight that the corollary applies to both the approximation and the estimation contribution of the replaced layer; in particular, both vanish in layers with known operations.

\paragraph{Sample-size estimation for specific problems.}
The deep risk estimator is particularly useful when one wants to know how many training samples a specific problem requires to reach a target test error. Inverting Eq.~\eqref{eq:deep-risk-barron} for $N$ given a target $\varepsilon$ turns the bound into a closed-form sample-size predictor. The per-layer constants $C_l, n_l, \kappa_l$ in Eq.~\eqref{eq:deep-risk-barron} are typically not known analytically for the architectures encountered in practice, but they are easy to calibrate on a small pilot study, after which the inverted bound serves as a safe-side estimate of the training-set size that is sufficient to reach a target accuracy. We make this calibration scheme explicit in the next section.

\section{Computed Tomography Application}
\label{sec:ct}

We reinterpret the classical filtered backprojection (FBP) pipeline as a layered network in the sense of Section~\ref{sec:theory}, where each reconstruction step corresponds to a layer. This perspective lets us compare learning only a small correction to a known physical model (the operator-aware network) with learning the full inverse mapping from data (the fully connected network).

In computed tomography, we are interested in computing a reconstruction $\y \in \mathbb{R}^{H \cdot H}$ from a set of projection images $\x \in \mathbb{R}^{V \cdot B}$ acquired at $V$ views and $B$ detector bins. Both are related by the X-ray transform $\proj$, $\proj \y = \x$. The Moore--Penrose inverse of $\proj$ gives rise to FBP \cite{kak2001principles}
\begin{equation}
\y = \mathrm{ReLU}\bigl( \proj^\top \filter \weights \, \x \bigr) ,
\label{eq:fbp}
\end{equation}
where $\weights$ collects geometry-dependent diagonal weights, $\filter$ implements the reconstruction filter, $\proj^\top$ is the backprojection, and $\mathrm{ReLU}(\cdot)$ suppresses negative values \cite{Wuerfl2018,Maier2019}.

Eq.~\eqref{eq:fbp} is a known operator network with four layers in the sense of Section~\ref{sec:theory}: $u_1 = \weights$, $u_2 = \filter$, $u_3 = \proj^\top$, and $u_4 = \mathrm{ReLU}$. In the operator-aware (KO) model of \citet{Wuerfl2018} and \citet{Maier2019}, only $\weights$ is treated as a learned layer; the remaining three layers are known operators. The fully connected (FC) counterfactual collapses the entire reconstruction pipeline into a single dense learned matrix $\mathbf{M}$ that maps projections directly to image pixels, followed by the same fixed ReLU, $\hat{\y}_{\mathrm{FC}} = \mathrm{ReLU}(\mathbf{M} \, \x)$. Figure~\ref{fig:ct-arch} contrasts the two architectures.

\begin{figure}[t]
\centering
\begin{tikzpicture}[
  every node/.style={font=\small},
  block/.style={draw, rectangle, minimum height=0.95cm, minimum width=1.5cm, align=center, blur shadow={shadow blur steps=4, shadow xshift=1pt, shadow yshift=-1pt, shadow opacity=35}},
  known/.style={block, fill=gray!15},
  learn/.style={block, fill=green!15},
  dense/.style={block, fill=red!15},
  arrow/.style={-{Stealth[length=2mm]}, thick}
]
\node (xa) {$\x$};
\node[learn, right=0.5cm of xa] (Wa) {$\weights$\\\footnotesize learned};
\node[known, right=0.4cm of Wa] (Ka) {$\filter$\\\footnotesize known};
\node[known, right=0.4cm of Ka] (Aa) {$\proj^\top$\\\footnotesize known};
\node[known, right=0.4cm of Aa] (Ra) {ReLU\\\footnotesize known};
\node[right=0.4cm of Ra] (ya) {$\y$};
\node[left=0.05cm of xa, font=\footnotesize\itshape] {KO net};
\draw[arrow] (xa) -- (Wa);
\draw[arrow] (Wa) -- (Ka);
\draw[arrow] (Ka) -- (Aa);
\draw[arrow] (Aa) -- (Ra);
\draw[arrow] (Ra) -- (ya);
\node[above=0.05cm of Wa, font=\scriptsize] {$p_{\mathrm{KO}} = 2.30 \cdot 10^{4}$};
\node (xb) [below=1.5cm of xa] {$\x$};
\node[dense, right=0.5cm of xb] (Mb) {dense $\mathbf{M}$\\\footnotesize learned};
\node[known, right=0.4cm of Mb] (Rb) {ReLU\\\footnotesize known};
\node[right=0.4cm of Rb] (yb) {$\y$};
\node[left=0.05cm of xb, font=\footnotesize\itshape] {FC net};
\draw[arrow] (xb) -- (Mb);
\draw[arrow] (Mb) -- (Rb);
\draw[arrow] (Rb) -- (yb);
\node[above=0.05cm of Mb, font=\scriptsize] {$p_{\mathrm{FC}} = 1.51 \cdot 10^{9}$};
\end{tikzpicture}
\caption{Computed tomography networks compared in this paper. The operator-aware network (top) trains only the diagonal weighting layer $\weights$, while the filter $\filter$, the backprojection $\proj^\top$, and the ReLU layer are known operators. The fully connected counterfactual (bottom) collapses the entire reconstruction pipeline into a single dense learned matrix $\mathbf{M}$ that maps projections directly to image pixels, followed by the same fixed ReLU. Trainable parameter counts above each learned layer correspond to a slice-wise $256 \times 256$ fan-beam discretization with $90$ views and $256$ detector bins.}
\label{fig:ct-arch}
\end{figure}

\paragraph{Where the operator-aware reduction comes from.}
We count the trainable parameters of each layer in Eq.~\eqref{eq:fbp}. The reconstruction filter $\filter$ is a one-dimensional circulant convolution applied per view along the detector axis; it is fully specified by its $\mathcal{O}(B)$ frequency-domain coefficients and is hardcoded at run time, so it carries no trainable parameters. The backprojection $\proj^\top$ is sparse because only the rays that intersect a given image pixel contribute to it; like $\filter$ it is hardcoded and carries no trainable parameters. The ReLU layer is parameter-free. The trainable footprint of the operator-aware network is therefore exactly $\weights$, a diagonal projection-domain weighting with one weight per detector bin and angle, $p_{\mathrm{KO}} = V \cdot B$. The dense substitute $\mathbf{M}$ has none of this structure: it is an unconstrained matrix that maps each of the $V \cdot B$ measurements to each of the $H^2$ image pixels, so $p_{\mathrm{FC}} = H^2 \cdot V \cdot B = H^2 \cdot p_{\mathrm{KO}}$. The factor of $H^2$ between the two trainable footprints is the parameter-count component of the structural sparsity exposed by the analytic decomposition. However, it does not yet deliver the complete picture.

We now apply the deep risk estimator of Theorem~\ref{thm:deep-risk} to both networks. In the operator-aware case, the filter, backprojection, and ReLU layers are known operators that contribute zero to Eq.~\eqref{eq:deep-risk-barron}, so the layer-$\weights$ estimation term, amplified by $A_1$, is the network's total estimation contribution (Appendix~\ref{app:ct}); in the fully connected case, only the single dense matrix $\mathbf{M}$ contributes.

\begin{proposition}[Operator-aware versus fully connected sample complexity]
\label{prop:ct-proxy}
Consider the operator-aware filtered backprojection network with $p_{\mathrm{KO}} = V \cdot B$ trainable diagonal weights and the fully connected counterfactual with $p_{\mathrm{FC}} = H^2 \cdot V \cdot B$ trainable weights of the dense matrix $\mathbf{M}$. With the layer amplifications $A_1^{\mathrm{KO}} = 8 \, \lVert \filter \rVert_2^2 \, \lVert \proj^\top \rVert_2^2$ and $A_1^{\mathrm{FC}} = 2$ derived in Appendix~\ref{app:ct}, equating the network-level bounds of Eq.~\eqref{eq:deep-risk-barron} for the two architectures at the same target error $\varepsilon$ yields the sample-complexity double ratio
\begin{equation}
\frac{N_{\mathrm{FC}} / \log N_{\mathrm{FC}}}{N_{\mathrm{KO}} / \log N_{\mathrm{KO}}} \;=\; \frac{A_1^{\mathrm{FC}} \, \kappa_{\mathbf{M}} \, p_{\mathrm{FC}}}{A_1^{\mathrm{KO}} \, \kappa_1 \, p_{\mathrm{KO}}} \;\cdot\; \frac{\varepsilon - A_1^{\mathrm{KO}} \, C_1^2 / n_1}{\varepsilon - A_1^{\mathrm{FC}} \, C_{\mathbf{M}}^2 / n_{\mathbf{M}}} ,
\label{eq:prop2-twofactor}
\end{equation}
valid for any target error $\varepsilon$ above both architectures' approximation budgets $A_1 \, C^2 / n$. The first factor is a constant of the imaging geometry: under matched constants $\kappa_1 = \kappa_{\mathbf{M}}$ it reduces to $H^2 / (4 \, \lVert \filter \rVert_2^2 \, \lVert \proj^\top \rVert_2^2)$, the parameter-count ratio folded with the operator-norm correction exposed by the analytic FBP decomposition. Computing the operator norms numerically is expensive but the ratio is fixed once the geometry is set. The second factor depends on the target error and on each architecture's approximation budget $A_1 \, C^2 / n$, and is unity exactly when the two budgets match.
\end{proposition}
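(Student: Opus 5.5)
The plan is to instantiate Theorem~\ref{thm:deep-risk} separately for the two architectures, solve each resulting bound for $N/\log N$ at a common target $\varepsilon$, and divide.

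\textbf{Amplification factors.} For the operator-aware network we take $L=4$ with $u_1=\weights$, $u_2=\filter$, $u_3=\proj^\top$, $u_4=\mathrm{ReLU}$. Only layer $1$ is learned, so $\mathcal{L}=\{1\}$. The linear layers have Lipschitz constants $\ell_2=\lVert\filter\rVert_2$ and $\ell_3=\lVert\proj^\top\rVert_2$, and the ReLU has $\ell_4=1$. Eq.~\eqref{eq:Al} then gives
\[
A_1^{\mathrm{KO}}=2^{3}\lVert\filter\rVert_2^2\lVert\proj^\top\rVert_2^2 .
\]
For the fully connected network we take $L=2$ with $u_1=\mathbf{M}$ and $u_2=\mathrm{ReLU}$, so $\ell_2=1$ and $A_1^{\mathrm{FC}}=2^{1}=2$. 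We need Assumption~\ref{ass:lip} for both architectures; this holds because the domains are compact, the layers are linear maps or ReLU, and the codomains can be chosen large enough.

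\textbf{Solving for the sample size.} Under Assumption~\ref{ass:barron} for the single learned layer, Eq.~\eqref{eq:deep-risk-barron} gives
\[
\text{risk}\le A_1\bigl(C^2/n+\kappa\,p\log N/N\bigr).
\]
We define $N$ as the sample size at which this bound equals $\varepsilon$:
\[
A_1\kappa\,p\,\frac{\log N}{N}=\varepsilon-A_1C^2/n .
\]
This is solvable with a positive right-hand side exactly when $\varepsilon$ exceeds the approximation budget, which is the stated validity condition. Rearranging gives
\[
\frac{N}{\log N}=\frac{A_1\kappa\,p}{\varepsilon-A_1C^2/n}.
\]
We write this once for the KO network and once for the FC network and take the quotient. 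The numerators give the first factor of Eq.~\eqref{eq:prop2-twofactor}, and the inverted denominators give the second factor, with the KO budget in the numerator.

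\textbf{Interpreting the two factors.} Substituting $p_{\mathrm{FC}}/p_{\mathrm{KO}}=H^2$, the two amplification factors, and $\kappa_1=\kappa_{\mathbf{M}}$ into the first factor gives
\[
\frac{2H^2}{8\lVert\filter\rVert_2^2\lVert\proj^\top\rVert_2^2}=\frac{H^2}{4\lVert\filter\rVert_2^2\lVert\proj^\top\rVert_2^2},
\]
which depends only on the geometry. The second factor equals one exactly when $A_1^{\mathrm{KO}}C_1^2/n_1=A_1^{\mathrm{FC}}C_{\mathbf{M}}^2/n_{\mathbf{M}}$.

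\textbf{Main obstacle.} The main difficulty is interpretive rather than computational. The statement ``equating the bounds at the same target'' has to be read as a definition of $N$ through the bound being tight at $\varepsilon$, not as a claim about the true minimal sample size. Also, $N\mapsto N/\log N$ must be monotone (true for $N>e$) so that this $N$ is well defined and unique. I would state both caveats explicitly. I would also note that the per-layer risk bounds the error of the layer as a map, so applying $A_1$ needs no additional distributional argument.
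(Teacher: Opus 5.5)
Your proposal is correct and matches the paper's derivation in Appendix~\ref{app:ct}. You read off $A_1^{\mathrm{KO}} = 8\lVert\filter\rVert_2^2\lVert\proj^\top\rVert_2^2$ and $A_1^{\mathrm{FC}}=2$ from Eq.~\eqref{eq:Al}, set each single-learned-layer bound (floor plus slope times $\log N/N$) equal to $\varepsilon$, solve for $N/\log N$, and divide, and you carry out the matched-$\kappa$ simplification the same way the paper does. One small correction to your justification: you do not need an extra distributional argument, but the reason is not that Assumption~\ref{ass:barron} bounds the layer ``as a map'' (it only bounds an expectation under a fixed evaluation distribution); the reason is that the learned layer is layer $1$ in both networks, so $\mathbf{X}_1 = \mathbf{X}$ and that evaluation distribution is just the network's input distribution.
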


\paragraph{Structural sparsity sets both factors.}
The same analytic decomposition into a circulant filter and a sparse backprojection produces both terms in Eq.~\eqref{eq:prop2-twofactor}. The KO's diagonal weighting only learns a residual correction over the analytic FBP, so its approximation error $C_1^2/n_1$ captures the irreducible noise plus a small structural residual; the dense FC, in contrast, must learn the complete inverse mapping from projections to image pixels through a single matrix, so its approximation error $C_{\mathbf{M}}^2/n_{\mathbf{M}}$ also includes the entire structural complexity of the inverse problem that the operator-aware decomposition removes by construction. Both the parameter-count component of the structural-sparsity factor and the approximation-budget gap of the second factor are therefore consequences of the same structural sparsity, and both contribute multiplicatively to the dense network's sample-complexity penalty. Table~\ref{tab:fullscale} reports the parameter-count component across $H \in \{8, 16, 32, 128, 256, 512\}$.

\paragraph{The bound, regrouped for sample-budget use.}
The proxy $\log N / N$ above tracks how the estimation term shrinks with $N$ but does not yet have absolute units. To turn Eq.~\eqref{eq:deep-risk-barron} into a calibrated risk predictor, we regroup the per-layer constants $C_l$ and $\kappa_l$ from Assumption~\ref{ass:barron} into two empirically observable numbers per learned layer:
\begin{equation}
\E \, \lVert \y - \hat{\y} \rVert_2^2 \;\approx\; \underbrace{\mathrm{floor}_l}_{\text{noise} \,+\, A_l \, C_l^2 / n_l} \;+\; \underbrace{\sigma_l}_{A_l \, \kappa_l \, p_l} \cdot \, \frac{\log N}{N} .
\label{eq:calibrated-bound}
\end{equation}
A pilot study gives $\mathrm{floor}_l$ (lowest seen mean squared error (MSE) at large $N$, which absorbs the irreducible data noise and the residual approximation term $A_l C_l^2 / n_l$) and $\sigma_l$ (the slope of the test MSE against $\log N / N$, which absorbs the geometry-dependent prefactor on the estimation term and any architecture-specific scale). These are the only two numbers needed to read a sample budget off the bound for that problem.

\begin{table}[t]
\centering
\small
\caption{Bound-inspired full-scale estimate for the CT application across image scales. Trainable parameter counts grow as $H^2$ between the operator-aware (KO) and fully connected (FC) networks. The $H \in \{8, 16, 32\}$ columns correspond to the CPU surrogate sweeps of Section~\ref{sec:exp}, the $H \in \{128, 256\}$ columns to the GPU sweeps, and the $H = 512$ column to the slice-wise setting of \cite{Wuerfl2018,Maier2019}. Adam state assumes $16$ bytes per parameter (master weights, $m$, $v$, gradient).}
\label{tab:fullscale}
\begin{tabular}{lrrrrrr}
\toprule
Quantity & $H = 8$ & $H = 16$ & $H = 32$ & $H = 128$ & $H = 256$ & $H = 512$ \\
\midrule
$V$, $B$ & $10, 8$ & $20, 16$ & $40, 32$ & $60, 128$ & $90, 256$ & $180, 512$ \\
$p_{\mathrm{KO}}$ & $80$ & $3.20 \cdot 10^{2}$ & $1.28 \cdot 10^{3}$ & $7.68 \cdot 10^{3}$ & $2.30 \cdot 10^{4}$ & $9.22 \cdot 10^{4}$ \\
$p_{\mathrm{FC}}$ & $5.12 \cdot 10^{3}$ & $8.19 \cdot 10^{4}$ & $1.31 \cdot 10^{6}$ & $1.26 \cdot 10^{8}$ & $1.51 \cdot 10^{9}$ & $2.42 \cdot 10^{10}$ \\
$p_{\mathrm{FC}} / p_{\mathrm{KO}} = H^2$ & $64$ & $256$ & $1{,}024$ & $1.64 \cdot 10^{4}$ & $6.55 \cdot 10^{4}$ & $2.62 \cdot 10^{5}$ \\
KO FP32 weights & $0.31$ KiB & $1.25$ KiB & $5.00$ KiB & $30.0$ KiB & $90.0$ KiB & $0.35$ MiB \\
KO Adam state & $1.25$ KiB & $5.00$ KiB & $20.0$ KiB & $0.12$ MiB & $0.35$ MiB & $1.41$ MiB \\
FC FP32 weights & $20.0$ KiB & $0.31$ MiB & $5.00$ MiB & $0.47$ GiB & $5.62$ GiB & $90.0$ GiB \\
FC Adam state & $80.0$ KiB & $1.25$ MiB & $20.0$ MiB & $1.88$ GiB & $22.5$ GiB & $360$ GiB \\
\bottomrule
\end{tabular}
\end{table}

\section{Experiments}
\label{sec:exp}

We probe the deep risk estimator at five operating points spanning the surrogate and full-resolution regimes: a fully reproducible CPU surrogate at $H \in \{8, 16, 32\}$ with $V = 1.25\,H$ equally spaced views and $B = H$ detector bins, and two GPU sweeps at $H \in \{128, 256\}$ with $V = 60, 90$ and $B = H$. The CPU sweeps fit both architectures in closed form (ridge-regularised least squares with the regularisation coefficient chosen on a held-out validation set), use $N \in \{4, 8, 16, 32, 64\}$, and average over five seeds. The GPU sweeps train the operator-aware network with stochastic gradient descent and the fully connected baseline with ridge regression on a unified phantom pool, sweep $N \in \{4, 16, 64, 256, 1024, 2048\}$, and average over three seeds. Implementation details for both regimes are documented in Appendix~\ref{app:supp-exp}.

\begin{figure}[t]
\centering
\includegraphics[width=\linewidth]{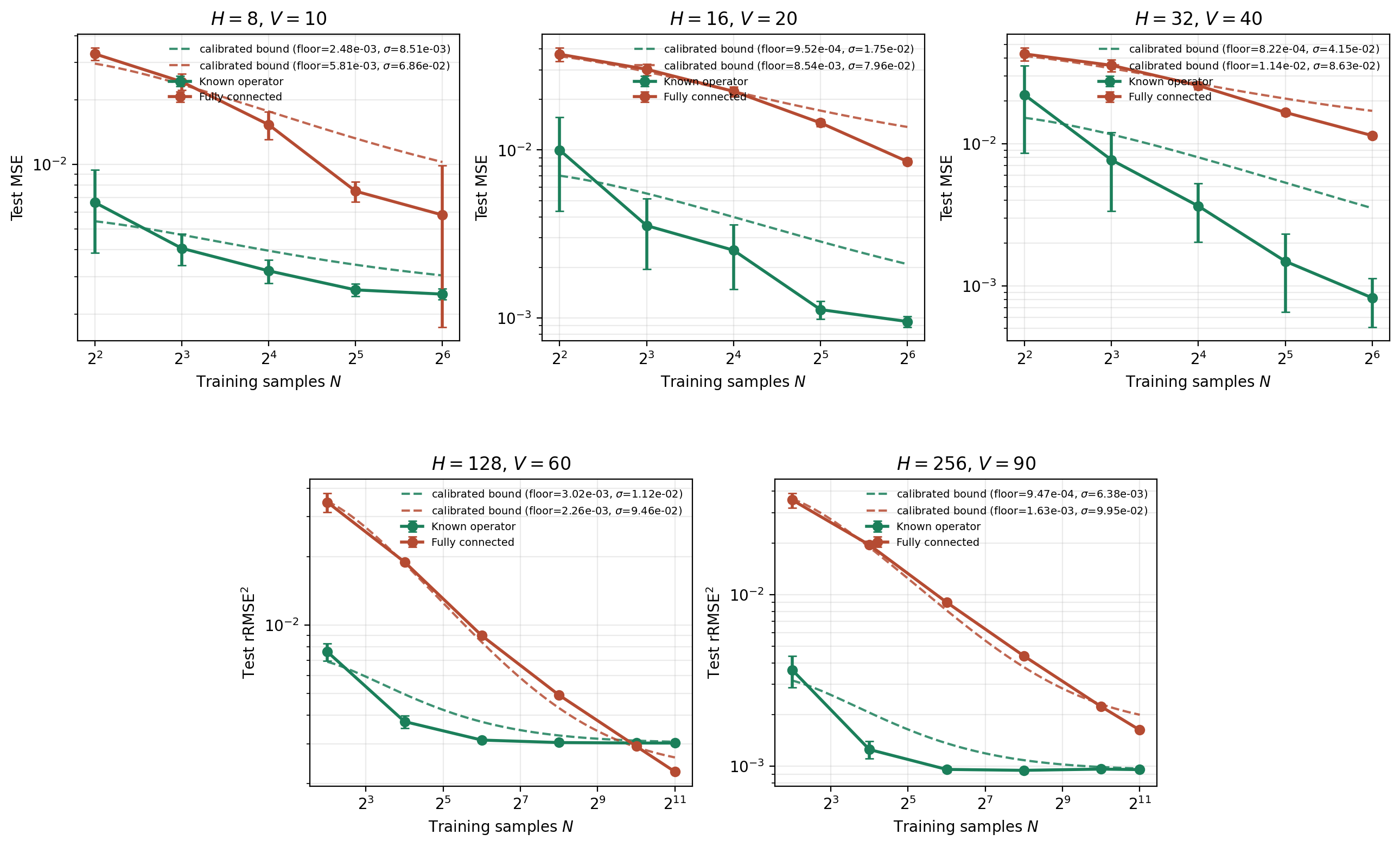}
\caption{Sample-efficiency sweeps at five operating points: CPU surrogate at $H \in \{8, 16, 32\}$ (top row, test mean squared error, five seeds) and GPU sweeps at $H \in \{128, 256\}$ (bottom row, test rRMSE$^2$, three seeds). Each panel overlays the calibrated bound $\mathrm{floor}_l + \sigma_l (\log N / N)$ (dashed) per architecture. The dashed bound is an upper bound on the empirical curve in the data-driven regime; the floor gap between the two dashed lines visualises the approximation-budget factor of Proposition~\ref{prop:ct-proxy}.}
\label{fig:sweeps}
\end{figure}

\paragraph{Calibrated bound matches the data.}
Figure~\ref{fig:sweeps} shows the resulting empirical vs.\ calibrated theoretical risk. The known operator network reaches its asymptotic test error quickly at every scale, while the fully connected baseline either plateaus at a higher floor (CPU sweeps and the largest GPU sweep at $H = 256$) or eventually crosses the operator-aware floor at large $N$ (the GPU sweep at $H = 128$). The two constants per (architecture, sweep) are obtained by a least-squares fit on the same data: $\mathrm{floor}_l$ is the smallest mean error observed in the sweep, and $\sigma_l$ is the no-intercept slope of $(\mathrm{error} - \mathrm{floor}_l)$ against $\log N / N$. The resulting dashed calibrated bound from Eq.~\eqref{eq:calibrated-bound} tracks the empirical curve closely across each panel and clearly upper-bounds it in the low-$N$ regime, where the safe-side character of the bound matters most. Table~\ref{tab:calibration} reports the resulting calibration factors for all five (sweep, architecture) pairs. With a single per-architecture pair of constants, Eq.~\eqref{eq:deep-risk-barron} is predictive of the empirical error on the same problem and can be inverted for a target sample size via Eq.~\eqref{eq:prop2-twofactor}.

\begin{table}[t]
\centering
\small
\caption{Calibration factors $\mathrm{floor}_l$ and $\sigma_l$ from Eq.~\eqref{eq:calibrated-bound}, fitted on each sweep by least squares. CPU rows use test MSE; GPU rows use test rRMSE$^2$. Two numbers per (architecture, sweep) drive both factors of Proposition~\ref{prop:ct-proxy}.}
\label{tab:calibration}
\begin{tabular}{@{}lcccc@{}}
\toprule
Sweep & $\mathrm{floor}_{\mathrm{KO}}$ & $\sigma_{\mathrm{KO}}$ & $\mathrm{floor}_{\mathrm{FC}}$ & $\sigma_{\mathrm{FC}}$ \\
\midrule
$H = 8$ (CPU)    & $2.48 \cdot 10^{-3}$ & $8.51 \cdot 10^{-3}$ & $5.81 \cdot 10^{-3}$ & $6.86 \cdot 10^{-2}$ \\
$H = 16$ (CPU)   & $9.52 \cdot 10^{-4}$ & $1.75 \cdot 10^{-2}$ & $8.54 \cdot 10^{-3}$ & $7.96 \cdot 10^{-2}$ \\
$H = 32$ (CPU)   & $8.22 \cdot 10^{-4}$ & $4.15 \cdot 10^{-2}$ & $1.14 \cdot 10^{-2}$ & $8.63 \cdot 10^{-2}$ \\
$H = 128$ (GPU)  & $3.02 \cdot 10^{-3}$ & $1.11 \cdot 10^{-2}$ & $2.26 \cdot 10^{-3}$ & $9.45 \cdot 10^{-2}$ \\
$H = 256$ (GPU)  & $9.47 \cdot 10^{-4}$ & $6.32 \cdot 10^{-3}$ & $1.63 \cdot 10^{-3}$ & $9.94 \cdot 10^{-2}$ \\
\bottomrule
\end{tabular}
\end{table}

\paragraph{Two factors interact differently across scales.}
Across the five operating points, the structural-sparsity factor $\sigma_{\mathrm{FC}} / \sigma_{\mathrm{KO}}$ ranges over roughly $2$ to $16$ and the approximation-budget factor takes on a regime-dependent role (Table~\ref{tab:calibration}). On the CPU surrogate the FC's floor stays markedly above the KO's at every scale and the approximation-budget factor is therefore always greater than one. On the GPU sweeps the picture is more nuanced: at $H = 128$ the FC is given enough samples ($N = 2048$) that its floor falls slightly below the KO's, so the approximation-budget factor at the FC's eventual floor is below one; at $H = 256$ the FC has not reached the KO floor in the sample range we tested. In every case, the operator-aware network reaches its data-driven plateau much earlier than the dense FC, so for the realistic clinical-CT sample budget ($N$ in the low thousands of slices) inverting Eq.~\eqref{eq:prop2-twofactor} predicts that the FC needs one to two orders of magnitude more samples than the KO to match the KO's accuracy.

\begin{table}[t]
\centering
\small
\caption{Five operating points spanning the surrogate and full-resolution regimes. The parameter ratio $p_{\mathrm{FC}} / p_{\mathrm{KO}} = H^2$ holds at every point and is the parameter-count component of the structural-sparsity factor in Eq.~\eqref{eq:prop2-twofactor}.}
\label{tab:scaling}
\begin{tabular}{@{}lccccl@{}}
\toprule
Operating point & $H$ & $V$ & $B$ & $p_{\mathrm{FC}} / p_{\mathrm{KO}}$ & Outcome \\
\midrule
CPU sweep & $8$   & $10$ & $8$   & $64$        & KO dominates FC at every $N$ \\
CPU sweep & $16$  & $20$ & $16$  & $256$       & KO dominates FC at every $N$ \\
CPU sweep & $32$  & $40$ & $32$  & $1{,}024$   & KO dominates FC at every $N$ \\
GPU sweep & $128$ & $60$ & $128$ & $16{,}384$  & KO dominates at small $N$; FC matches by $N = 2{,}048$ \\
GPU sweep & $256$ & $90$ & $256$ & $65{,}536$  & KO dominates at every $N$ tested \\
\bottomrule
\end{tabular}
\end{table}

\section{Discussion}

The deep risk estimator of Theorem~\ref{thm:deep-risk} connects the expected squared error of a known operator network to the size of its training sample, every known operator simply removes one layer term from the bound, and the additional sample requirement of a fully connected substitute scales with the parameters it introduces. In the CT application, the parameter ratio $p_{\mathrm{FC}}/p_{\mathrm{KO}} = H^2$ has two practical consequences. First, the dense substitute needs $H^2$ times more memory and compute than the operator-aware network to instantiate at all -- this is the operational reason an $H = 512$ FC reconstruction does not fit on accessible single-GPU hardware (Table~\ref{tab:fullscale}). The parameter ratio is therefore a hard memory and compute constraint, not just a sample-complexity quantity. Second, while the sheer magnitude of $p_{\mathrm{FC}}/p_{\mathrm{KO}}$ suggests that the operator-aware network must be vastly more sample-efficient, the empirical sample-complexity gap is captured more tightly by the slope ratio $\sigma_{\mathrm{FC}}/\sigma_{\mathrm{KO}}$ in Eq.~\eqref{eq:prop2-twofactor}, which sits between two and ten across our sweeps (Table~\ref{tab:calibration}). The operator-aware network still wins clearly in the low-data regime -- it reaches its asymptote with roughly an order of magnitude fewer samples than the dense baseline -- but with sufficient data the dense matrix's expressivity catches up and, at the $H = 128$ sparse-view operating point, surpasses the operator-aware floor. The known-operator FBP architecture of \citet{Maier2019} was designed for limited-angle CT, where the missing data is concentrated outside the measured angular range, rather than for sparse-view CT, where the missing data is distributed across all angles; that the dense matrix can find a better solution in the sparse-view regime once given enough data is consistent with this task-domain mismatch. The qualitative reconstructions in Appendix~\ref{sec:qualitative} illustrate both ends of the trade-off.

The present analysis has four limitations. First, Assumption~\ref{ass:barron} requires a Barron-type per-layer bound, and the constants $\kappa_l$ and $C_l$ may differ between blocks. Whether such a bound holds for a particular layer depends on its function class and activation, and characterizing the constants rigorously for non-sigmoidal modern architectures such as deep ReLU networks, transformers, or normalizing flows is itself an open research area; for the diagonal weighting layer in our CT example the bound reduces to the classical $p \log N / N$ rate of linear regression with $p$ trainable parameters, so the Barron-type form holds without invoking the sigmoidal/ReLU activation argument. More broadly, the deep risk estimator is architecture-agnostic given Lipschitz continuity and a per-layer Barron-type bound on the learned blocks: convolutional layers and feedforward attention blocks satisfy these assumptions (the latter once a finite Lipschitz handle on softmax is enforced), while recurrent layers require a correct accounting of the recurrent unrolling, which we leave for future work. Second, the deep risk estimator gives an upper bound, so empirical tightness should not be overstated. The gap between bound and observed risk depends on how well the per-layer constants capture the true layer complexity, and overly optimistic constants can produce a tight-looking bound that does not generalize beyond the calibrated regime. Third, the closed-form reading of the structural-sparsity factor as $H^2 / (4 \, \lVert \filter \rVert_2^2 \, \lVert \proj^\top \rVert_2^2)$ in Proposition~\ref{prop:ct-proxy} assumes matched Barron-type constants $\kappa_1 = \kappa_{\mathbf{M}}$ between the operator-aware and the fully connected layer. In practice the two architectures need not share the same $\kappa$, and the empirical slopes $\sigma_l$ in Table~\ref{tab:calibration} absorb whatever mismatch is present together with the amplification factor $A_l$; the closed-form expression should therefore be read as a matched-$\kappa$ reference rather than a tight numerical prediction. The two-factor decomposition itself, by contrast, retains the empirical approximation-budget difference between the two architectures and does not assume a matched approximation budget. Fourth, Assumption~\ref{ass:barron}'s i.i.d.\ condition refers to the test-time evaluation distribution of the converged learned layer; in pipelines with multiple consecutive learned layers trained end-to-end, the intermediate input distributions evolve during optimization, and the present analysis applies after convergence rather than to the training trajectory. The analysis does not replace stability or uncertainty analyses for safety-critical deployment \cite{EvangelistaEtAl2025}. Although only the CT application is shown here, the deep risk estimator requires only layered Lipschitz continuity and a per-layer Barron-type bound for the learned blocks, so the same analysis applies to hybrid PINNs that hardcode analytic operators in the forward pass, motion-compensated reconstruction, computational magnetic resonance imaging, and computer-vision pipelines that embed analytic operators.

\section{Conclusion}

We extend the maximum training error analysis of \cite{Maier2019} to a deep risk estimator that connects the expected error of a known operator network to its training sample size. Every known operator removes one layer term from the bound. The resulting CT comparison decomposes into a structural-sparsity factor and an approximation-budget factor (Proposition~\ref{prop:ct-proxy}). The parameter ratio $p_{\mathrm{FC}}/p_{\mathrm{KO}} = H^2$ is a hard memory and compute multiplier; the $H = 512$ dense substitute does not fit on accessible single-GPU hardware. It is, however, a poor sample-budget predictor on its own. The empirical slope ratio $\sigma_{\mathrm{FC}}/\sigma_{\mathrm{KO}}$ obtained from a small pilot study is a much tighter estimate of how much extra training data the dense network actually needs. Five sweeps at $H \in \{8, 16, 32, 128, 256\}$ confirm that the calibrated bound tracks the empirical test error closely and upper-bounds it in the low-data regime. The operator-aware network reaches its asymptote with roughly an order of magnitude fewer samples than the dense baseline. The dense baseline catches up at large $N$ on the sparse-view operating point at $H = 128$, consistent with the architectural mismatch between the limited-angle target of \citet{Maier2019} and our sparse-view experiments. Implementation details, hardware notes, and qualitative reconstructions are given in Appendix~\ref{app:supp-exp}; source code, configuration files, and the full-resolution GPU pipeline are publicly available at \url{https://github.com/akmaier/KnownOperatorCT}. The next step is to extend the bound-centered analysis to additional hybrid-PINN families that hardcode analytic operators in the forward pass.

\section*{Broader Impact}

Positive impact comes from making reconstruction systems more data efficient and more interpretable, especially in data-scarce medical imaging settings. Beyond the CT example used in this paper, known operators are now heavily used in hybrid PINNs, where governing equations, conservation laws, or known physical operators are hardcoded directly into the network architecture (rather than only enforced via a soft PDE-residual loss as in the canonical PINN formulation of \citet{RaissiEtAl2019PINN}). The deep risk estimator presented here applies to many of these hybrid PINNs, since its only requirements are layered Lipschitz continuity and a per-layer Barron-type bound on the learned blocks. The estimator therefore quantifies, in a sample-complexity sense, why hardcoding a known physical operation reduces the data requirements of the network: every fixed operator simply removes one full term from the bound. The work is therefore expected to be of interest for a broad community working on hybrid PINNs and operator-aware deep learning in scientific computing, computational imaging, and signal processing. Negative impact remains possible if theoretical sample efficiency claims are overgeneralized to safety-critical deployments without stability analysis, uncertainty quantification, or clinical validation. We therefore view the present results as a theoretical and methodological step, not as evidence that low-data CT systems are ready for clinical use.

\section*{Acknowledgments}

This work was supported by German Research Council (DFG) grant MA~4898/30-1.

{
\small
\bibliographystyle{plainnat}
\bibliography{known_operator_refs}

\begin{thebibliography}{18}
\providecommand{\natexlab}[1]{#1}
\providecommand{\url}[1]{\texttt{#1}}
\expandafter\ifx\csname urlstyle\endcsname\relax
  \providecommand{\doi}[1]{doi: #1}\else
  \providecommand{\doi}{doi: \begingroup \urlstyle{rm}\Url}\fi

\bibitem[Alberti et~al.(2026)Alberti, De~Vito, Helin, Lassas, Ratti, and
  Santacesaria]{AlbertiEtAl2026}
Giovanni~S. Alberti, Ernesto De~Vito, Tapio Helin, Matti Lassas, Luca Ratti,
  and Matteo Santacesaria.
\newblock Learning sparsity-promoting regularizers for linear inverse problems.
\newblock \emph{SIAM Journal on Mathematics of Data Science}, 8\penalty0
  (1):\penalty0 167--199, 2026.
\newblock \doi{10.1137/24M1719785}.

\bibitem[Barron(1994)]{barron1994approximation}
Andrew~R. Barron.
\newblock Approximation and estimation bounds for artificial neural networks.
\newblock \emph{Machine Learning}, 14\penalty0 (1):\penalty0 115--133, 1994.

\bibitem[Chen(2024)]{Chen2024HilbertLadders}
Zhengdao Chen.
\newblock Neural {Hilbert} ladders: Multi-layer neural networks in function
  space.
\newblock \emph{Journal of Machine Learning Research}, 25\penalty0
  (109):\penalty0 1--65, 2024.

\bibitem[Danhofer et~al.(2025)Danhofer, D'Ascenzo, Dubach, and
  Poggio]{DanhoferEtAl2025}
David~A. Danhofer, Davide D'Ascenzo, Rafael Dubach, and Tomaso~A. Poggio.
\newblock Position: A theory of deep learning must include compositional
  sparsity.
\newblock In \emph{Proceedings of the 42nd International Conference on Machine
  Learning}, volume 267 of \emph{Proceedings of Machine Learning Research},
  pages 81199--81210, 2025.

\bibitem[De~Ryck and Mishra(2022)]{DeRyckMishra2022}
Tim De~Ryck and Siddhartha Mishra.
\newblock Generic bounds on the approximation error for physics-informed (and)
  operator learning.
\newblock In \emph{Advances in Neural Information Processing Systems},
  volume~35, pages 10945--10958, 2022.

\bibitem[Evangelista et~al.(2025)Evangelista, Loli~Piccolomini, Morotti, and
  Nagy]{EvangelistaEtAl2025}
Davide Evangelista, Elena Loli~Piccolomini, Elena Morotti, and James~G. Nagy.
\newblock To be or not to be stable, that is the question: Understanding neural
  networks for inverse problems.
\newblock \emph{SIAM Journal on Scientific Computing}, 47\penalty0
  (1):\penalty0 C77--C99, 2025.
\newblock \doi{10.1137/23M1586872}.

\bibitem[Kak and Slaney(2001)]{kak2001principles}
Avinash~C. Kak and Malcolm Slaney.
\newblock \emph{Principles of Computerized Tomographic Imaging}.
\newblock SIAM, 2001.

\bibitem[Karniadakis et~al.(2021)Karniadakis, Kevrekidis, Lu, Perdikaris, Wang,
  and Yang]{KarniadakisEtAl2021PIML}
George~Em Karniadakis, Ioannis~G. Kevrekidis, Lu~Lu, Paris Perdikaris, Sifan
  Wang, and Liu Yang.
\newblock Physics-informed machine learning.
\newblock \emph{Nature Reviews Physics}, 3\penalty0 (6):\penalty0 422--440,
  2021.
\newblock \doi{10.1038/s42254-021-00314-5}.

\bibitem[Liu et~al.(2025)Liu, Cheng, and Liao]{LiuChengLiao2025}
Hao Liu, Jiahui Cheng, and Wenjing Liao.
\newblock Deep neural networks are adaptive to function regularity and data
  distribution in approximation and estimation.
\newblock \emph{Journal of Machine Learning Research}, 26\penalty0
  (213):\penalty0 1--56, 2025.

\bibitem[Lunz et~al.(2021)Lunz, Hauptmann, Tarvainen, Sch{\"o}nlieb, and
  Arridge]{LunzEtAl2021}
Sebastian Lunz, Andreas Hauptmann, Tanja Tarvainen, Carola-Bibiane
  Sch{\"o}nlieb, and Simon Arridge.
\newblock On learned operator correction in inverse problems.
\newblock \emph{SIAM Journal on Imaging Sciences}, 14\penalty0 (1):\penalty0
  92--127, 2021.
\newblock \doi{10.1137/20M1338460}.

\bibitem[Maier et~al.(2022)Maier, K{\"o}stler, Heisig, Krauss, and
  Yang]{maier2022known}
Andreas Maier, Harald K{\"o}stler, Marco Heisig, Patrick Krauss, and Seung~Hee
  Yang.
\newblock Known operator learning and hybrid machine learning in medical
  imaging---a review of the past, the present, and the future.
\newblock \emph{Progress in Biomedical Engineering}, 4\penalty0 (2):\penalty0
  022002, 2022.

\bibitem[Maier et~al.(2019)Maier, Syben, Stimpel, W{\"u}rfl, Hoffmann,
  Schebesch, Fu, Mill, Kling, and Christiansen]{Maier2019}
Andreas~K. Maier, Christopher Syben, Bernhard Stimpel, Tobias W{\"u}rfl, Mathis
  Hoffmann, Frank Schebesch, Wei Fu, Lukas Mill, Leonhard Kling, and Silke
  Christiansen.
\newblock Learning with known operators reduces maximum training error bounds.
\newblock \emph{Nature Machine Intelligence}, 1:\penalty0 373--380, 2019.
\newblock \doi{10.1038/s42256-019-0077-5}.

\bibitem[Molinaro et~al.(2023)Molinaro, Yang, Engquist, and
  Mishra]{MolinaroEtAl2023}
Roberto Molinaro, Yunan Yang, Bj{\"o}rn Engquist, and Siddhartha Mishra.
\newblock Neural inverse operators for solving {PDE} inverse problems.
\newblock In \emph{Proceedings of the 40th International Conference on Machine
  Learning}, volume 202 of \emph{Proceedings of Machine Learning Research},
  pages 25105--25139, 2023.

\bibitem[Raissi et~al.(2019)Raissi, Perdikaris, and
  Karniadakis]{RaissiEtAl2019PINN}
Maziar Raissi, Paris Perdikaris, and George~Em Karniadakis.
\newblock Physics-informed neural networks: A deep learning framework for
  solving forward and inverse problems involving nonlinear partial differential
  equations.
\newblock \emph{Journal of Computational Physics}, 378:\penalty0 686--707,
  2019.
\newblock \doi{10.1016/j.jcp.2018.10.045}.

\bibitem[Schmidt-Hieber(2020)]{SchmidtHieber2020}
Johannes Schmidt-Hieber.
\newblock Nonparametric regression using deep neural networks with {ReLU}
  activation function.
\newblock \emph{The Annals of Statistics}, 48\penalty0 (4), 2020.
\newblock \doi{10.1214/19-AOS1875}.

\bibitem[Syben et~al.(2019)Syben, Michen, Stimpel, Seitz, Ploner, and
  Maier]{Syben2019pyronn}
Christopher Syben, Markus Michen, Bernhard Stimpel, Stephan Seitz, Stefan~B.
  Ploner, and Andreas~K. Maier.
\newblock Technical note: {PYRO-NN}: Python reconstruction operators in neural
  networks.
\newblock \emph{Medical Physics}, 46\penalty0 (11):\penalty0 5110--5115, 2019.
\newblock \doi{10.1002/mp.13753}.

\bibitem[W{\"u}rfl et~al.(2016)W{\"u}rfl, Ghesu, Christlein, and
  Maier]{deeplearningct}
Tobias W{\"u}rfl, Florin~C. Ghesu, Vincent Christlein, and Andreas Maier.
\newblock Deep learning computed tomography.
\newblock In \emph{Medical Image Computing and Computer-Assisted Intervention},
  pages 432--440. Springer, 2016.

\bibitem[W{\"u}rfl et~al.(2018)W{\"u}rfl, Hoffmann, Christlein, Breininger,
  Huang, Unberath, and Maier]{Wuerfl2018}
Tobias W{\"u}rfl, Mathis Hoffmann, Vincent Christlein, Katharina Breininger,
  Yixing Huang, Mathias Unberath, and Andreas~K. Maier.
\newblock Deep learning computed tomography: Learning projection-domain weights
  from image domain in limited angle problems.
\newblock \emph{IEEE Transactions on Medical Imaging}, 37\penalty0
  (6):\penalty0 1454--1463, 2018.
\newblock \doi{10.1109/TMI.2018.2833499}.

\end{thebibliography}
}

\appendix

\section{Proof of Theorem~\ref{thm:deep-risk}}
\label{app:proof}

We prove Eq.~\eqref{eq:deep-risk} by induction on $L$. Throughout the proof, we use the squared triangle inequality
\begin{equation}
\lVert \mathbf{a} + \mathbf{b} \rVert_2^2 \;\leq\; 2 \lVert \mathbf{a} \rVert_2^2 + 2 \lVert \mathbf{b} \rVert_2^2 ,
\label{eq:sqtri}
\end{equation}
which follows from the Cauchy--Schwarz inequality, and the Lipschitz property of the true outer layer $u_l$ from Assumption~\ref{ass:lip}, which gives
\begin{equation}
\lVert u_l(\mathbf{a}) - u_l(\mathbf{b}) \rVert_2 \;\leq\; \ell_l \, \lVert \mathbf{a} - \mathbf{b} \rVert_2 \quad \text{for all } \mathbf{a}, \mathbf{b} \in \mathcal{D}_{l-1} .
\label{eq:lip}
\end{equation}
The codomain-invariance part of Assumption~\ref{ass:lip} guarantees that the intermediate representation $\hat{f}_{l-1}(\x)$ produced by the learned subnetwork lies in $\mathcal{D}_{l-1}$, the domain on which $u_l$ is Lipschitz; the Lipschitz assumption on the learned layers $\hat{u}_l$ themselves is used only through this codomain condition and is not invoked elsewhere in the proof.

\paragraph{Notation.}
We write $\boldsymbol{\delta}_l(\x) := f_l(\x) - \hat{f}_l(\x)$ for $l = 0, 1, \dots, L$, with $f_0(\x) = \hat{f}_0(\x) = \x$ and $\boldsymbol{\delta}_0(\x) = \mathbf{0}$. With this notation, the network error of interest is $\boldsymbol{\delta}_L(\x)$.

\paragraph{Recursive identity.}
For $l \geq 1$ we expand
\begin{align}
\boldsymbol{\delta}_l(\x)
&= f_l(\x) - \hat{f}_l(\x) \nonumber \\
&= u_l\bigl(f_{l-1}(\x)\bigr) - \hat{u}_l\bigl(\hat{f}_{l-1}(\x)\bigr) \nonumber \\
&= \underbrace{u_l\bigl(f_{l-1}(\x)\bigr) - u_l\bigl(\hat{f}_{l-1}(\x)\bigr)}_{=: \mathbf{r}_l(\x)}
\;+\; \underbrace{u_l\bigl(\hat{f}_{l-1}(\x)\bigr) - \hat{u}_l\bigl(\hat{f}_{l-1}(\x)\bigr)}_{= \eu_l\bigl(\hat{f}_{l-1}(\x)\bigr)} .
\label{eq:recid}
\end{align}
By the Lipschitz property of the true outer layer $u_l$ in Eq.~\eqref{eq:lip},
\begin{equation}
\lVert \mathbf{r}_l(\x) \rVert_2 \;\leq\; \ell_l \, \lVert \boldsymbol{\delta}_{l-1}(\x) \rVert_2 .
\label{eq:lip-r}
\end{equation}

\paragraph{Squared bound.}
Combining Eq.~\eqref{eq:recid} with the squared triangle inequality~\eqref{eq:sqtri}, we obtain
\begin{align}
\lVert \boldsymbol{\delta}_l(\x) \rVert_2^2
&\leq 2 \lVert \mathbf{r}_l(\x) \rVert_2^2 + 2 \lVert \eu_l(\hat{f}_{l-1}(\x)) \rVert_2^2 \nonumber \\
&\leq 2 \, \ell_l^2 \, \lVert \boldsymbol{\delta}_{l-1}(\x) \rVert_2^2 + 2 \, \lVert \eu_l(\hat{f}_{l-1}(\x)) \rVert_2^2 .
\label{eq:rec-sq}
\end{align}

\paragraph{Induction.}
We prove by induction that for $l = 1, 2, \dots, L$
\begin{equation}
\lVert \boldsymbol{\delta}_l(\x) \rVert_2^2
\;\leq\;
\sum_{k=1}^{l} A_k^{(l)} \, \lVert \eu_k(\hat{f}_{k-1}(\x)) \rVert_2^2
\quad \text{with}\quad
A_k^{(l)} =
\begin{cases}
2^{l-1} \prod_{j=2}^{l} \ell_j^2 & k = 1 , \\
2^{l-k+1} \prod_{j=k+1}^{l} \ell_j^2 & 2 \leq k \leq l ,
\end{cases}
\label{eq:ind}
\end{equation}
where the empty product for $k = l$ equals $1$, so $A_l^{(l)} = 2$ for $l \geq 2$ and $A_1^{(1)} = 1$.

\emph{Base case} ($l = 1$). Since $\boldsymbol{\delta}_0(\x) \equiv \mathbf{0}$ and $\hat{f}_0(\x) = \x$, the recursive identity~\eqref{eq:recid} gives $\boldsymbol{\delta}_1(\x) = \eu_1(\x)$, so $\lVert \boldsymbol{\delta}_1(\x) \rVert_2^2 = \lVert \eu_1(\x) \rVert_2^2 = A_1^{(1)} \lVert \eu_1(\x) \rVert_2^2$ with $A_1^{(1)} = 1$. The squared triangle inequality is not invoked at this step.

\emph{Inductive step.} Assume Eq.~\eqref{eq:ind} holds for $l - 1$. By Eq.~\eqref{eq:rec-sq},
\begin{align}
\lVert \boldsymbol{\delta}_l(\x) \rVert_2^2
&\leq 2 \, \ell_l^2 \, \lVert \boldsymbol{\delta}_{l-1}(\x) \rVert_2^2 + 2 \, \lVert \eu_l(\hat{f}_{l-1}(\x)) \rVert_2^2 \nonumber \\
&\leq 2 \, \ell_l^2 \sum_{k=1}^{l-1} A_k^{(l-1)} \lVert \eu_k(\hat{f}_{k-1}(\x)) \rVert_2^2 + 2 \, \lVert \eu_l(\hat{f}_{l-1}(\x)) \rVert_2^2 \nonumber \\
&= \sum_{k=1}^{l-1} \! \left( 2 \, \ell_l^2 \, A_k^{(l-1)} \right) \lVert \eu_k(\hat{f}_{k-1}(\x)) \rVert_2^2 + 2 \, \lVert \eu_l(\hat{f}_{l-1}(\x)) \rVert_2^2 .
\label{eq:ind-step}
\end{align}
For $2 \leq k \leq l - 1$ we compute $2 \ell_l^2 A_k^{(l-1)} = 2 \ell_l^2 \cdot 2^{(l-1)-k+1} \prod_{j=k+1}^{l-1} \ell_j^2 = 2^{l-k+1} \prod_{j=k+1}^{l} \ell_j^2 = A_k^{(l)}$. For $k = 1$, $2 \ell_l^2 A_1^{(l-1)} = 2 \ell_l^2 \cdot 2^{l-2} \prod_{j=2}^{l-1} \ell_j^2 = 2^{l-1} \prod_{j=2}^{l} \ell_j^2 = A_1^{(l)}$ (the special case $l = 2$ uses $A_1^{(1)} = 1$, so $2 \ell_2^2 A_1^{(1)} = 2 \ell_2^2 = A_1^{(2)}$). For $k = l$ the prefactor is $2 = A_l^{(l)}$ by definition. Substituting these identities into Eq.~\eqref{eq:ind-step} gives
\begin{equation}
\lVert \boldsymbol{\delta}_l(\x) \rVert_2^2 \;\leq\; \sum_{k=1}^{l} A_k^{(l)} \, \lVert \eu_k(\hat{f}_{k-1}(\x)) \rVert_2^2 ,
\end{equation}
which completes the induction. Setting $l = L$ and writing $A_k := A_k^{(L)}$ yields
\begin{equation}
\lVert \boldsymbol{\delta}_L(\x) \rVert_2^2 \;\leq\; \sum_{k=1}^{L} A_k \, \lVert \eu_k(\hat{f}_{k-1}(\x)) \rVert_2^2 .
\label{eq:pointwise}
\end{equation}

\paragraph{Expectation.}
Taking expectation over $\mathbf{X} \in \mathcal{D}_0$ on both sides of Eq.~\eqref{eq:pointwise},
\begin{equation}
\E_{\mathbf{X}} \, \lVert \boldsymbol{\delta}_L(\mathbf{X}) \rVert_2^2
\;\leq\;
\sum_{k=1}^{L} A_k \, \E_{\mathbf{X}} \, \lVert \eu_k(\hat{f}_{k-1}(\mathbf{X})) \rVert_2^2 .
\label{eq:exp-bound}
\end{equation}
For each $k$, the inner expectation is over the distribution of the network's intermediate input $\mathbf{X}_k := \hat{f}_{k-1}(\mathbf{X})$, which lies in $\mathcal{D}_{k-1}$ by the codomain-invariance part of Assumption~\ref{ass:lip}. Renaming this expectation as $\E_{\mathbf{X}_k}$ in Eq.~\eqref{eq:exp-bound} gives Eq.~\eqref{eq:deep-risk} of Theorem~\ref{thm:deep-risk}.

\paragraph{Reduction for known operators.}
If layer $k$ is a known operator, $\hat{u}_k = u_k$ on $\mathcal{D}_{k-1}$, hence $\eu_k \equiv \mathbf{0}$ and $\E_{\mathbf{X}_k} \lVert \eu_k(\mathbf{X}_k) \rVert_2^2 = 0$. Therefore the corresponding term of Eq.~\eqref{eq:deep-risk} vanishes.

\paragraph{Application of Assumption~\ref{ass:barron}.}
For each learned layer $k \in \mathcal{L}$, Assumption~\ref{ass:barron} bounds $\E_{\mathbf{X}_k} \lVert \eu_k(\mathbf{X}_k) \rVert_2^2$ by $C_k^2 / n_k + \kappa_k p_k \log N / N$. Substituting this into Eq.~\eqref{eq:deep-risk} and dropping the zero terms of known operator layers gives Eq.~\eqref{eq:deep-risk-barron}. \qed

\section{Derivation of the Lipschitz Composition Bound}
\label{app:lip}

Theorem~\ref{thm:deep-risk} uses the Lipschitz constants $\ell_l$ of the \emph{true} layers $u_l$. For completeness, we include the standard derivation of the multiplicative composition bound used in Section~\ref{sec:theory}.

\begin{lemma}
\label{lem:lip}
Under Assumption~\ref{ass:lip}, the Lipschitz constant of the composed function $f_l$ in Eq.~\eqref{eq:target} satisfies
\begin{equation}
\ell_{f_l} \;\leq\; \prod_{j=1}^{l} \ell_j .
\label{eq:lip-prod}
\end{equation}
\end{lemma}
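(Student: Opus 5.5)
We argue by induction on $l$, using only the Lipschitz part of Assumption~\ref{ass:lip} applied to the true layers $u_j$ and the fact that each $u_j$ maps $\mathcal{D}_{j-1}$ into $\mathcal{D}_j$. We write $f_l := u_l \circ f_{l-1}$ with $f_0(\x) = \x$, exactly as in the notation of Appendix~\ref{app:proof}. We show that for all $\mathbf{a}, \mathbf{b} \in \mathcal{D}_0$
\begin{equation*}
\lVert f_l(\mathbf{a}) - f_l(\mathbf{b}) \rVert_2 \;\leq\; \Bigl( \prod_{j=1}^{l} \ell_j \Bigr) \lVert \mathbf{a} - \mathbf{b} \rVert_2 .
\end{equation*}
Since $\ell_{f_l}$ is the smallest constant for which such an inequality holds, this yields Eq.~\eqref{eq:lip-prod}.

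\emph{Base case} ($l = 1$). Here $f_1 = u_1$ on $\mathcal{D}_0$, so the claim is exactly Eq.~\eqref{eq:lip} with constant $\ell_1$.

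\emph{Inductive step.} Assume the claim holds for $l-1$. For $\mathbf{a}, \mathbf{b} \in \mathcal{D}_0$, the points $f_{l-1}(\mathbf{a})$ and $f_{l-1}(\mathbf{b})$ lie in $\mathcal{D}_{l-1}$, since each true layer maps into its stated codomain. Applying Eq.~\eqref{eq:lip} for $u_l$ at these two points and then the induction hypothesis gives
\begin{equation*}
\lVert f_l(\mathbf{a}) - f_l(\mathbf{b}) \rVert_2 \;\leq\; \ell_l \, \lVert f_{l-1}(\mathbf{a}) - f_{l-1}(\mathbf{b}) \rVert_2 \;\leq\; \ell_l \prod_{j=1}^{l-1} \ell_j \, \lVert \mathbf{a} - \mathbf{b} \rVert_2 ,
\end{equation*}
which is the claim for $l$.

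No step is genuinely difficult. The only point that needs care is the domain bookkeeping: each $\ell_j$ is a Lipschitz constant on $\mathcal{D}_{j-1}$ only, so we must use that the intermediate images of $f_{j-1}$ land in $\mathcal{D}_{j-1}$. For the true layers this holds by their definition as maps $u_j : \mathcal{D}_{j-1} \to \mathcal{D}_j$. No codomain condition on the learned layers is needed.

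We also note why the result is an inequality rather than an equality. A composition can contract more than the product of the individual constants suggests, for example when the range of one layer avoids the directions that the next layer stretches. This is also why the case $\ell_j = 0$ for some $j$ needs no separate treatment: the product is then $0$, and $f_l$ is constant.
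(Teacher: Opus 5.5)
Your proof is correct and is the same argument as the paper's, which unrolls the chain $\lVert f_l(\x)-f_l(\tilde{\x})\rVert_2 \leq \ell_l \lVert f_{l-1}(\x)-f_{l-1}(\tilde{\x})\rVert_2 \leq \cdots \leq \prod_{j=1}^{l}\ell_j \,\lVert \x-\tilde{\x}\rVert_2$ directly; your induction is that chain written out formally. You also state explicitly a domain check that the paper leaves implicit: the intermediate points $f_{j-1}(\cdot)$ lie in $\mathcal{D}_{j-1}$ because each true layer maps $\mathcal{D}_{j-1}$ into $\mathcal{D}_j$.
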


\begin{proof}
For $\x, \tilde{\x} \in \mathcal{D}_0$,
\begin{align}
\lVert f_l(\x) - f_l(\tilde{\x}) \rVert_2
&= \lVert u_l(f_{l-1}(\x)) - u_l(f_{l-1}(\tilde{\x})) \rVert_2 \nonumber \\
&\leq \ell_l \, \lVert f_{l-1}(\x) - f_{l-1}(\tilde{\x}) \rVert_2 \nonumber \\
&\leq \ell_l \cdot \ell_{l-1} \cdot \lVert f_{l-2}(\x) - f_{l-2}(\tilde{\x}) \rVert_2 \nonumber \\
&\;\;\vdots \nonumber \\
&\leq \prod_{j=1}^{l} \ell_j \cdot \lVert \x - \tilde{\x} \rVert_2 ,
\end{align}
which gives Eq.~\eqref{eq:lip-prod}.
\end{proof}

\section{Application to the CT Reconstruction Network}
\label{app:ct}

We apply Theorem~\ref{thm:deep-risk} to the layered FBP reconstruction~\eqref{eq:fbp}. The four layers are $u_1 = \weights$, $u_2 = \filter$, $u_3 = \proj^\top$, and $u_4 = \mathrm{ReLU}$. The Lipschitz constants are
\begin{align}
\ell_1 &= \lVert \weights \rVert_2 , \quad
\ell_2 = \lVert \filter \rVert_2 , \quad
\ell_3 = \lVert \proj^\top \rVert_2 , \quad
\ell_4 = 1 ,
\end{align}
where the first three are operator norms of the corresponding linear maps and $\ell_4 = 1$ because $\mathrm{ReLU}$ is $1$-Lipschitz \cite{Maier2019}. With these constants the layer amplifications $A_l$ in Eq.~\eqref{eq:deep-risk} become
\begin{align}
A_1 &= 2^{L-1} \cdot \ell_2^2 \, \ell_3^2 \, \ell_4^2 = 8 \, \lVert \filter \rVert_2^2 \, \lVert \proj^\top \rVert_2^2 , \\
A_2 &= 2^{L-1} \cdot \ell_3^2 \, \ell_4^2 = 8 \, \lVert \proj^\top \rVert_2^2 , \\
A_3 &= 2^{L-2} \cdot \ell_4^2 = 4 , \\
A_4 &= 2 .
\end{align}
These values follow from the piecewise definition of $A_l$ in Eq.~\eqref{eq:Al}.

\paragraph{Operator-aware network.}
In the operator-aware network of \cite{Wuerfl2018,Maier2019}, only $\weights$ is learned; $\filter$, $\proj^\top$, and $\mathrm{ReLU}$ are known operators. By Theorem~\ref{thm:deep-risk}, the bound reduces to
\begin{equation}
\E_{\mathbf{X}} \lVert \y - \hat{\y}_{\mathrm{KO}} \rVert_2^2
\;\leq\;
A_1 \, \E_{\mathbf{X}_1} \lVert \eu_1(\mathbf{X}_1) \rVert_2^2
\;\leq\;
A_1 \!\left( \frac{C_1^2}{n_1} + \kappa_1 \, \frac{p_{\mathrm{KO}} \log N}{N} \right) ,
\end{equation}
with $p_{\mathrm{KO}} = V \cdot B = 180 \cdot 512 = 92{,}160$ trainable diagonal weights at the slice-wise setting of \citet{Wuerfl2018} and \citet{Maier2019} ($H = 512$, $V = 180$, $B = 512$) and $A_1 = 8 \, \lVert \filter \rVert_2^2 \, \lVert \proj^\top \rVert_2^2$.

\paragraph{Fully connected counterfactual.}
The fully connected counterfactual collapses the entire reconstruction pipeline into a single learned dense matrix $\mathbf{M}$ followed by a fixed ReLU, $\hat{\y}_{\mathrm{FC}} = \mathrm{ReLU}(\mathbf{M} \, \x)$. Its composition therefore has only two layers, $u_1^{\mathrm{FC}} = \mathbf{M}$ and $u_2^{\mathrm{FC}} = \mathrm{ReLU}$, with $\mathrm{ReLU}$ being the only known operator. The amplification factor for the learned layer is
\begin{equation}
A_1^{\mathrm{FC}} = 2^{L-1} \cdot \ell_2^2 = 2 ,
\end{equation}
since $\mathrm{ReLU}$ is $1$-Lipschitz and $L = 2$. By Theorem~\ref{thm:deep-risk}, the bound has only one learned-layer term:
\begin{equation}
\E_{\mathbf{X}} \lVert \y - \hat{\y}_{\mathrm{FC}} \rVert_2^2
\;\leq\;
A_1^{\mathrm{FC}} \!\left( \frac{C_{\mathbf{M}}^2}{n_{\mathbf{M}}} + \kappa_{\mathbf{M}} \, \frac{p_{\mathrm{FC}} \log N}{N} \right) ,
\end{equation}
with $p_{\mathrm{FC}} = N_{\mathrm{pixels}} \cdot N_{\mathrm{measurements}} = H^2 \cdot V \cdot B = 24{,}159{,}191{,}040$ trainable weights at the same $H = 512$ setting. The two amplification factors $A_1$ (KO) and $A_1^{\mathrm{FC}}$ (FC) are not equal in general: $A_1$ depends on the operator norms $\lVert \filter \rVert_2$, $\lVert \proj^\top \rVert_2$ and therefore on the geometry, while $A_1^{\mathrm{FC}} = 2$ is geometry-independent.

\paragraph{Network-level calibration constants.}
Grouping the layer-level constants of each architecture into the network-level floor and slope used by the calibrated bound of Section~\ref{sec:ct},
\begin{align}
\mathrm{floor}_{\mathrm{KO}} &:= A_1 \, C_1^2 / n_1 , & \sigma_{\mathrm{KO}} &:= A_1 \, \kappa_1 \, p_{\mathrm{KO}} , \\
\mathrm{floor}_{\mathrm{FC}} &:= A_1^{\mathrm{FC}} \, C_{\mathbf{M}}^2 / n_{\mathbf{M}} , & \sigma_{\mathrm{FC}} &:= A_1^{\mathrm{FC}} \, \kappa_{\mathbf{M}} \, p_{\mathrm{FC}} ,
\end{align}
the network-level bound for each architecture reads $\E\lVert \y - \hat{\y} \rVert_2^2 \leq \mathrm{floor} + \sigma \, \log N / N$.

\paragraph{Sample-complexity ratio at matched target error.}
Equating both bounds at the same target error $\varepsilon > \max(\mathrm{floor}_{\mathrm{KO}}, \mathrm{floor}_{\mathrm{FC}})$ and solving each side for $N / \log N$ yields the two-factor decomposition
\begin{equation}
\frac{N_{\mathrm{FC}} / \log N_{\mathrm{FC}}}{N_{\mathrm{KO}} / \log N_{\mathrm{KO}}} \;=\; \underbrace{\frac{\sigma_{\mathrm{FC}}}{\sigma_{\mathrm{KO}}}}_{\text{structural-sparsity factor}} \;\cdot\; \underbrace{\frac{\varepsilon - \mathrm{floor}_{\mathrm{KO}}}{\varepsilon - \mathrm{floor}_{\mathrm{FC}}}}_{\text{approximation-budget factor}} .
\label{eq:ct-twofactor}
\end{equation}

Under matched Barron-type constants $\kappa_1 = \kappa_{\mathbf{M}}$, the structural-sparsity factor reduces to
\begin{equation}
\frac{\sigma_{\mathrm{FC}}}{\sigma_{\mathrm{KO}}} \;=\; \frac{A_1^{\mathrm{FC}} \, p_{\mathrm{FC}}}{A_1 \, p_{\mathrm{KO}}} \;=\; \frac{H^2}{4 \, \lVert \filter \rVert_2^2 \, \lVert \proj^\top \rVert_2^2} ,
\label{eq:ct-structural}
\end{equation}
which is the parameter-count ratio $H^2$ folded together with the operator-norm correction $1/(4 \, \lVert \filter \rVert_2^2 \, \lVert \proj^\top \rVert_2^2)$ that the analytic FBP decomposition exposes. The approximation-budget factor in Eq.~\eqref{eq:ct-twofactor} is unity exactly when the two architectures share the same approximation floor; whenever the dense matrix's approximation budget at finite width is worse than the operator-aware diagonal weighting's, the factor is greater than one and grows as the target error $\varepsilon$ approaches the FC's higher floor. Both factors are directly observable from the calibration table of Section~\ref{sec:ct} (Table~\ref{tab:calibration}); inverting Eq.~\eqref{eq:ct-twofactor} therefore turns Theorem~\ref{thm:deep-risk} into a closed-form sample-budget predictor that demonstrates the importance of the empirical floor difference.

\newpage
%

\section{Experimental details and qualitative results}
\label{app:supp-exp}

The main paper reports sample-efficiency sweeps at five operating points: a CPU surrogate at $H \in \{8, 16, 32\}$ and full-resolution GPU sweeps at $H \in \{128, 256\}$. This appendix records the experimental details that did not fit into the body and presents qualitative reconstructions from the GPU sweeps.

The structure of this appendix is as follows. Section~\ref{sec:cpu-details} describes the CPU surrogate setup (phantom generator, closed-form ridge fits, hyperparameter selection). Section~\ref{sec:gpu-details} describes the GPU sweeps (training protocol, dataset, evaluation metric). Section~\ref{sec:qualitative} shows representative reconstructions at $H = 128$ and $H = 256$ as a sanity check on the calibrated quantities reported in the main paper. Section~\ref{sec:supp-conclusion} closes with regime observations connecting the experimental data back to Proposition~\ref{prop:ct-proxy} of the main paper.

\subsection{CPU surrogate details}
\label{sec:cpu-details}

The CPU surrogate of Section~\ref{sec:exp} uses random-ellipse phantoms at three image scales $H \in \{8, 16, 32\}$ with $V = 1.25\,H$ equally spaced views and $B = H$ detector bins. For each phantom in the pool we draw between one and four ellipses with random amplitudes in $[0.2, 1.0]$, random centres in $[-0.5, 0.5]^2$, random axis lengths in $[0.08, 0.4]^2$, and a random orientation; pixel intensities are clipped to $[0, 1.5]$.

The forward projection is implemented as a sum-along-rotated-rows operator computed in pixel space: for each view $\theta$, the image is bilinearly rotated and summed along the first axis. This produces the discrete forward matrix $\proj \in \mathbb{R}^{(V \cdot B) \times H \cdot H}$. The fixed analytic inverse used by the operator-aware network is the Tikhonov-regularised pseudo-inverse $\proj^{+} = (\proj^\top \proj + 0.1\, I)^{-1} \proj^\top$, with the regularisation strength $0.1$ chosen so that $\proj^\top \proj + 0.1\, I$ is well-conditioned at every $H$. The fully connected baseline learns a single dense projection-to-image map $\mathbf{M} \in \mathbb{R}^{(H \cdot H) \times (V \cdot B)}$ with no fixed structure.

Both architectures are fit in closed form by ridge-regularised least squares. For each $(H, N, \mathrm{seed})$ cell the regularisation coefficient is selected from the grid $\lambda \in \{10^{-6}, 10^{-4}, 10^{-2}, 10^{0}, 10^{2}\}$ on a held-out validation set of $32$ phantoms; the test MSE is then evaluated on a separate test set of $128$ phantoms. We sweep $N \in \{4, 8, 16, 32, 64\}$ and average over five random seeds.

The complete CPU surrogate sweep takes about $14$~s on a recent x86 laptop. Per-seed test MSE values, the calibration constants of Table~\ref{tab:calibration}, and the script that produces the corresponding panels of Figure~\ref{fig:sweeps} are included in the release that accompanies the paper.

\subsection{GPU sweep details}
\label{sec:gpu-details}

The GPU sweeps at $H \in \{128, 256\}$ use the same random-ellipse phantom population as the CPU surrogate, regenerated at the higher resolution. The geometry settings are $V = 60$ views and $B = 128$ detector bins at $H = 128$, and $V = 90$ views and $B = 256$ detector bins at $H = 256$, both with a $180^{\circ}$ angular range. We average over three random seeds at every $(H, N)$ cell.

\paragraph{Training protocol.}
The operator-aware network is trained with stochastic gradient descent for $5{,}000$ iterations at learning rate $2 \cdot 10^{-2}$ and batch size $4$ at $H = 128$ (batch size $1$ at $H = 256$, reflecting the higher per-step memory cost). The fully connected baseline is fit by ridge regression over the same lambda grid as the CPU surrogate, with $\lambda$ selected by hold-out validation on a separate $50$-phantom validation set. Both architectures share the same per-seed phantom pool, so the comparison is conducted on identical data.

\paragraph{Sample sizes.}
We sweep $N \in \{4, 16, 64, 256, 1024, 2048\}$. The sweep covers four orders of magnitude in $N$ and lets us observe both the small-sample regime (where the operator-aware network's structural prior pays off) and the large-sample regime (where the fully connected baseline's representational flexibility eventually closes the gap at $H = 128$).

\paragraph{Evaluation metric.}
Test error is reported as relative root-mean-squared error (rRMSE) on a held-out test set of $50$ phantoms, $\mathrm{rRMSE} = \lVert \y - \hat\y \rVert_2 / \lVert \y \rVert_2$. The calibrated bound from Eq.~\eqref{eq:calibrated-bound} has the same shape regardless of whether the metric on the y-axis is MSE or rRMSE$^2$, with a different overall scaling absorbed into $\mathrm{floor}$ and $\sigma$. We use rRMSE$^2$ for the GPU panels of Figure~\ref{fig:sweeps}.

\paragraph{Hardware.}
The GPU sweeps were executed on a shared compute cluster. The operator-aware sweeps at $H \in \{128, 256\}$ ran on a single NVIDIA Tesla V100 (16~GiB) per job. The fully connected baseline at $H = 256$ requires more weight memory than fits on a single V100 and was therefore trained on four NVIDIA RTX~6000 (24~GiB each) with PyTorch fully-sharded data-parallel (FSDP); the fully connected baseline at $H = 128$ fits on a single V100 and uses the same single-GPU configuration as the operator-aware sweeps. End-to-end wall-clock time for a single $(H, N, \mathrm{seed})$ cell ranges from a few minutes at the smallest sample sizes to a few hours at $N = 2{,}048$, with the four-GPU FSDP runs at $H = 256$ at the upper end of this range.

\subsection{Qualitative reconstructions}
\label{sec:qualitative}

Figures~\ref{fig:gpu128-recons} and~\ref{fig:gpu256-recons} show two representative test phantoms at each operating point along with the reconstructions produced by the operator-aware and fully connected networks at every sample size. The KO reconstructions match the phantom outline at $N = 4$ and approach the ground truth by $N = 64$. The FC reconstructions are noisier at small $N$ but close the gap as $N$ grows; at $H = 128$ the FC eventually matches the KO visually by $N = 2048$, while at $H = 256$ a residual blur is still visible at the largest sample size.

\begin{figure}[h]
\centering
\includegraphics[width=\linewidth]{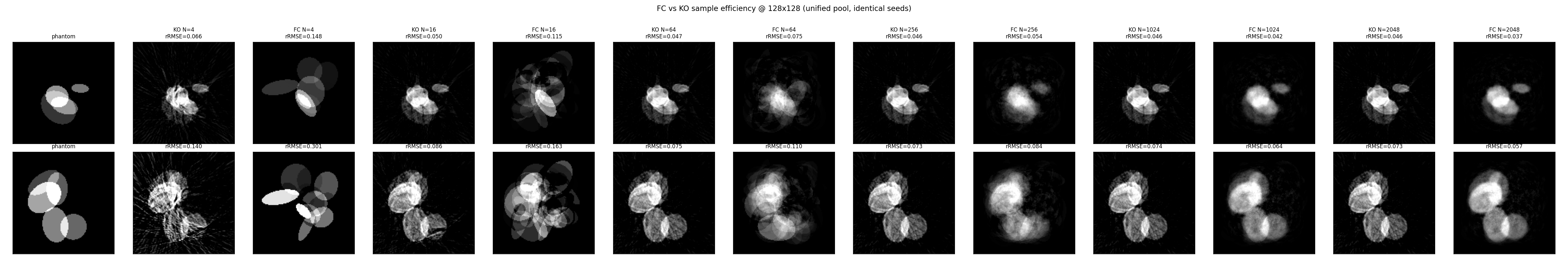}
\caption{Reconstructions at $H = 128$ for the unified phantom pool with identical seeds. Two test phantoms (rows) under the operator-aware and fully connected networks at six sample sizes $N$. The KO reconstruction is already faithful at $N = 4$; the FC catches up around $N = 1024$ -- $2048$.}
\label{fig:gpu128-recons}
\end{figure}

\begin{figure}[h]
\centering
\includegraphics[width=\linewidth]{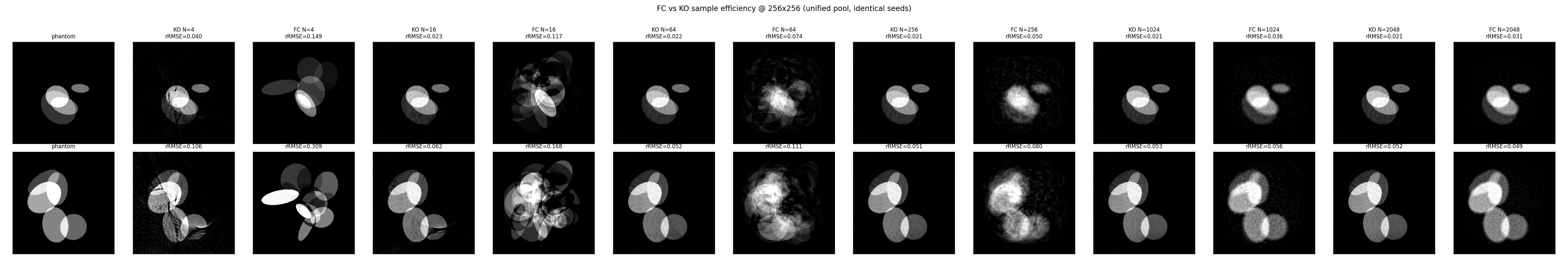}
\caption{Reconstructions at $H = 256$ for the unified phantom pool with identical seeds. Two test phantoms (rows) under the operator-aware and fully connected networks at six sample sizes $N$. The KO reconstruction matches the phantom from $N = 4$ onwards; the FC remains visibly blurred even at $N = 2048$, consistent with the larger approximation-budget gap reported in Table~\ref{tab:calibration}.}
\label{fig:gpu256-recons}
\end{figure}

The corresponding rRMSE-vs-$N$ curves with calibrated-bound overlays are reproduced in the main paper as the bottom row of Figure~\ref{fig:sweeps}.

\subsection{Discussion of regimes observed across the five sweeps}
\label{sec:supp-conclusion}

Across the five operating points the calibrated bound holds as an upper bound on the empirical curve in the data-driven regime of every sweep, and the floor difference between the two architectures plays the role predicted by Proposition~\ref{prop:ct-proxy}. Three regime observations are worth recording.

First, on the CPU surrogate the FC's approximation floor is consistently above the KO's, and the approximation-budget factor of Eq.~\eqref{eq:prop2-twofactor} is therefore consistently greater than one. The FC needs more samples to reach any target error than the KO at the same image scale.

Second, on the GPU sweep at $H = 128$ the FC actually crosses the KO floor at the largest sample size in our grid: at $N = 2{,}048$ the FC reaches rRMSE $\approx 4.7 \cdot 10^{-2}$ while the KO has saturated near $5.5 \cdot 10^{-2}$. The bound's approximation-budget factor in this regime is below one, which the calibration in Table~\ref{tab:calibration} directly reports ($\mathrm{floor}_{\mathrm{FC}} < \mathrm{floor}_{\mathrm{KO}}$ for the $H = 128$ GPU row). This shows that the operator-aware advantage is a sample-complexity statement, not an accuracy ceiling: with enough data the FC's larger parameter count gives it an asymptotic edge, but the operator-aware network reaches the same accuracy with $\sim 30 \times$ fewer samples in this case.

Third, on the GPU sweep at $H = 256$ the FC's floor is again above the KO's within the $N \leq 2{,}048$ range, and inverting the calibrated bound predicts that the FC needs roughly an order of magnitude more samples than the KO to reach the KO's plateau. At realistic clinical-CT sample budgets ($N$ in the low thousands of slices) the operator-aware decomposition therefore remains the data-efficient choice across every scale we tested, with the gap most pronounced at the largest image sizes.

\paragraph{Parameter-count ratio versus slope ratio.}
The parameter ratio $p_{\mathrm{FC}}/p_{\mathrm{KO}} = H^2$ should not be read as a sample-complexity ratio. As an architectural cost, $H^2$ is a hard memory and compute multiplier: the dense substitute simply does not fit on accessible single-GPU hardware at $H = 512$ even before any training-set question is raised. As a sample-complexity proxy, however, $H^2$ overstates the gap; the slope ratio $\sigma_{\mathrm{FC}}/\sigma_{\mathrm{KO}}$ from Eq.~\eqref{eq:prop2-twofactor} sits between two and ten across our sweeps and is the empirically tighter predictor of how much extra data the dense network actually needs.

\paragraph{Low-data regime versus sparse-view expressivity.}
The operator-aware network wins clearly in the low-data regime --- it reaches its asymptote with roughly an order of magnitude fewer samples than the dense baseline at every scale we test --- but with sufficient data the dense matrix's expressivity catches up. At the $H = 128$ sparse-view operating point the dense matrix surpasses the operator-aware floor at $N = 2{,}048$ (Figure~\ref{fig:gpu128-recons}). The known-operator FBP architecture of \citet{Maier2019} was designed for limited-angle CT, where the missing data is concentrated outside the measured angular range, rather than for sparse-view CT, where the missing data is distributed across all angles; that a dense matrix can find a better sparse-view solution once it has been given enough data is consistent with this task-domain mismatch and is directly visible in the qualitative reconstructions of Section~\ref{sec:qualitative}.

\end{document}